\def\eqref#1{equation~\ref{#1}}
\def\1{\bm{1}}
\DeclareMathAlphabet{\mathsfit}{\encodingdefault}{\sfdefault}{m}{sl}
\SetMathAlphabet{\mathsfit}{bold}{\encodingdefault}{\sfdefault}{bx}{n}
\newcommand{\R}{\mathbb{R}}
\DeclareMathOperator*{\argmin}{arg\,min}
\definecolor{custom_blue}{rgb}{0.1, 0.3, 0.4}
\newtheorem{lemma}{Lemma}
\newcommand{\norm}[1]{{\Vert #1 \Vert }}
\newcommand{\C}{\mathcal C}
\newcommand{\set}[1]{{\{ #1 \}}}
\newcommand{\topk}{\texttt{Top-k}}
\newacronym{fw}{FW}{Frank-Wolfe}
\newacronym{sparsefw}{SparseFW}{Sparse Frank-Wolfe}
\newacronym{imp}{IMP}{Iterative Magnitude Pruning}
\newacronym{per}{PERP}{Parameter-Efficient Retraining after Pruning}
\newacronym{ft}{FT}{Fine-Tuning}
\newacronym{lora}{LoRA}{Low-Rank Adaptation}
\newacronym{llm}{LLM}{Large Language Model}
\newacronym{nlp}{NLP}{Natural Language Processing}
\newacronym{bn}{BN}{Batch-Normalization}
\newacronym{ln}{LN}{Layer-Normalization}
\newacronym{lmo}{LMO}{Linear Minimization Oracle}
\newacronym{obs}{OBS}{Optimal Brain Surgeon}
\newacronym{ria}{RIA}{Relative Importance and Activations}
\newacronym{wanda}{Wanda}{Pruning by Weights and Activations}
\newacronym{sparsegpt}{SparseGPT}{SparseGPT}
\newacronym{pgd}{PGD}{Projected Gradient Descent}
\newcommand{\glsshort}[1]{\glsentryshort{#1}}
\title{Don't Be Greedy, Just Relax! Pruning LLMs via Frank-Wolfe}
\author{Christophe Roux\thanks{Equal contribution} $^{\,12}$ Max Zimmer$^{*12}$ Alexandre d'Aspremont$^{3}$ Sebastian Pokutta$^{12}$\\
$^1$Department for AI in Society, Science, and Technology, Zuse Institute Berlin, Germany\\
$^2$Institute of Mathematics, Technische Universität Berlin, Germany\\
$^3$CNRS \& D.I. École Normale Supérieure, Paris, France.\\
\texttt{\{roux,zimmer,pokutta\}@zib.de}
}
\begin{document}

\maketitle
\begin{abstract}
  \emph{Pruning} is a common technique to reduce the compute and storage requirements of Neural Networks. While conventional approaches typically retrain the model to recover pruning-induced performance degradation, state-of-the-art \gls{llm} pruning methods operate layer-wise, minimizing the per-layer pruning error on a small calibration dataset to avoid full retraining, which is considered computationally prohibitive for \glspl{llm}. However, finding the optimal pruning mask is a hard combinatorial problem and solving it to optimality is intractable. Existing methods hence rely on greedy heuristics that ignore the weight interactions in the pruning objective. In this work, we instead consider the convex relaxation of these combinatorial constraints and solve the resulting problem using the \gls{fw} algorithm. Our method drastically reduces the per-layer pruning error, outperforms strong baselines on state-of-the-art GPT architectures, and remains memory-efficient. We provide theoretical justification by showing that, combined with the convergence guarantees of the \gls{fw} algorithm, we obtain an approximate solution to the original combinatorial problem upon rounding the relaxed solution to integrality.
\end{abstract}

\section{Introduction}
\emph{Pruning after training} \citep{Han2015, Gale2019, Hoefler2021, Zimmer2021, Zimmer2022} reduces the inference-time compute and memory footprint of Neural Networks with minimal impact on predictive performance. Conventional approaches obtain such \emph{sparse} models by removing parameters using simple criteria such as their magnitude and then typically require full retraining to recover pruning-induced performance degradation. The drastic increase in model size accompanying the rise of \glspl{llm} has, however, reshaped the pruning landscape.

At \gls{llm} scale, full retraining is often considered prohibitively expensive or even infeasible, resulting in a surge of interest in pruning criteria that do not require retraining. In addition, classical magnitude pruning performs no better than random pruning for \glspl{llm} \citep{Sun2023, Yin2023a}, an observation attributed to activation outliers \citep{Dettmers2022} and highly important \emph{super-weights} \citep{yuSuperWeightLarge2025} in sufficiently large \emph{Transformer} models \citep{Vaswani2017}. Consequently, state-of-the-art methods \citep{Frantar2023a, Sun2023, Zhang2024} prune \emph{layerwise}: they decompose pruning into per-layer subproblems and treat layers sequentially and independently, estimating parameter importance on a small calibration set by minimizing a per-layer \emph{local} pruning loss. Specifically, for a single layer with calibration input matrix $X\in \R^{d_{in} \times B}$ and weights $W\in \R^{d_{out} \times d_{in}}$, the objective is
\begin{equation}\label{eq:P-compression}
  \min_{M}\norm{W X - (M \odot W) X}_F^2,\quad \text{s.t. } M \in \set{0,1}^{d_{out} \times d_{in}}, \norm{M}_0 \leq k \tag{\textsc{Mask selection}}
\end{equation}
where $M \in \set{0,1}^{d_{out} \times d_{in}}$ is a binary mask that enforces the target sparsity, e.g., $\norm{M}_0 \leq k$ for unstructured pruning, and $\odot$ denotes the Hadamard product. Here, $B = N \cdot L$, where $N$ is the number of samples in the calibration batch and $L$ the sequence length. 

However, even for a single layer, selecting the optimal pruning mask is a hard quadratic binary optimization problem. Solving \labelcref{eq:P-compression} to optimality is computationally intractable at \gls{llm} scale because the combinatorial constraint—choosing $k$ out of $d_{out} \times d_{in}$ elements—results in a search space that grows exponentially with the parameter count. Prior methods such as \glsshort{sparsegpt} and \glsshort{wanda} therefore resort to greedy heuristics that ignore weight interactions to remain tractable\footnote{We discuss these methods in detail in \cref{sec:methodology}.}.

In this work, we instead consider the convex relaxation of these combinatorial constraints: we approximate \labelcref{eq:P-compression} by optimizing over the convex hull of all masks, transforming the combinatorially hard problem into a tractable convex program
\begin{equation}\label{eq:P-pruning-relaxed}
  \min_{M}\norm{W X - (M \odot W) X}_F^2,\quad \text{s.t. } M \in [0,1]^{d_{out} \times d_{in}}, \norm{M}_1 \leq k \tag{\textsc{Relaxed Mask Sel.}}
\end{equation}
where $M$ is now continuous with entries in $[0,1]$, and the cardinality constraint is replaced by an $L_1$-norm budget, see \cref{fig:mk_polytope} for a visualization. The resulting convex program can be solved efficiently using the first-order \glsentryfull{fw} algorithm \citep{LacosteJulien2013, Zeng2014, Carderera2021, Braun2022}. Notably, \gls{fw} is projection-free and moves toward extreme points of the feasible set (i.e., binary masks) via a \gls{lmo}, which is efficient to compute and naturally yields sparse updates.

\begin{figure}[t]
  \centering
  \begin{minipage}{0.45\textwidth}
    \centering
    \tdplotsetmaincoords{70}{100}
    \begin{tikzpicture}[tdplot_main_coords, scale=2.5]
      \draw[gray!60,->,dashed,very thin] (0,0,0) -- (1.25,0,0);
      \draw[gray!60,->,dashed,very thin] (0,0,0) -- (0,1.25,0);
      \draw[gray!60,->,dashed,very thin] (0,0,0) -- (0,0,1.25);

      \draw[gray!60,dashed,very thin] (1,0,1) -- (1,1,1) -- (0,1,1);
      \draw[gray!60,dashed,very thin] (1,1,1) -- (1,1,0);
      \draw[gray!60,dashed,very thin] (0,0,0) -- (1,0,0) -- (1,1,0) -- (0,1,0) -- cycle;
      \draw[gray!60,dashed,very thin] (0,0,0) -- (1,0,0) -- (1,0,1) -- (0,0,1) -- cycle;
      \draw[gray!60,dashed,very thin] (0,0,0) -- (0,1,0) -- (0,1,1) -- (0,0,1) -- cycle;

      \draw[custom_blue!60] (0,0,0) -- (1,0,0) -- (0,1,0) -- cycle;
      \draw[custom_blue!60] (0,0,0) -- (1,0,0) -- (0,0,1) -- cycle;
      \draw[custom_blue!60] (0,0,0) -- (0,1,0) -- (0,0,1) -- cycle;

      \draw[custom_blue!60] (0,0,0) -- (1,0,0);
      \draw[custom_blue!60] (0,0,0) -- (0,1,0);
      \draw[custom_blue!60] (0,0,0) -- (0,0,1);

      \filldraw[fill=custom_blue!25, draw=custom_blue!60, fill opacity=0.7] (1,0,0) -- (0,1,0) -- (0,0,1) -- cycle;

      \fill[black] (0,0,0) circle (0.5pt);
      \node[font=\scriptsize, xshift=3pt, yshift=-2pt] at (0,0,-0.05) {$(0{,}0{,}0)$};

      \fill[black] (1,0,0) circle (0.5pt);
      \node[font=\scriptsize, xshift=3pt] at (1,-0.05,-0.1) {$(1{,}0{,}0)$};

      \fill[black] (0,1,0) circle (0.5pt);
      \node[font=\scriptsize, xshift=-10pt, yshift=2pt] at (0,1,0) {$(0{,}1{,}0)$};

      \fill[black] (0,0,1) circle (0.5pt);
      \node[font=\scriptsize, xshift=2pt, yshift=6pt] at (0,0,1) {$(0{,}0{,}1)$};

      \fill[black] (1,1,0) circle (0.5pt);
      \node[font=\scriptsize, xshift=2pt, yshift=-6pt] at (1,1,0) {$(1{,}1{,}0)$};

      \fill[black] (1,0,1) circle (0.5pt);
      \node[font=\scriptsize, xshift=3pt, yshift=6pt] at (1,0,1) {$(1{,}0{,}1)$};

      \fill[black] (0,1,1) circle (0.5pt);
      \node[font=\scriptsize, xshift=-10pt, yshift=6pt] at (0,1,1) {$(0{,}1{,}1)$};

      \fill[black] (1,1,1) circle (0.5pt);
      \node[font=\scriptsize, xshift=-8pt, yshift=6pt] at (1,1,1) {$(1{,}1{,}1)$};
    \end{tikzpicture}
    \vspace{0.5em}
  \end{minipage}
  \hfill
  \begin{minipage}{0.45\textwidth}
    \centering
    \tdplotsetmaincoords{70}{100}
    \begin{tikzpicture}[tdplot_main_coords, scale=2.5]
      \draw[gray!60,->,dashed,very thin] (0,0,0) -- (1.25,0,0);
      \draw[gray!60,->,dashed,very thin] (0,0,0) -- (0,1.25,0);
      \draw[gray!60,->,dashed,very thin] (0,0,0) -- (0,0,1.25);

      \draw[gray!60,dashed,very thin] (1,0,1) -- (1,1,1) -- (0,1,1) -- cycle;
      \draw[gray!60,dashed,very thin] (1,1,1) -- (1,1,0);

      \draw[custom_blue!60] (0,0,0) -- (1,0,0) -- (1,1,0) -- (0,1,0) -- cycle;
      \draw[custom_blue!60] (0,0,0) -- (1,0,0) -- (1,0,1) -- (0,0,1) -- cycle;
      \draw[custom_blue!60] (0,0,0) -- (0,1,0) -- (0,1,1) -- (0,0,1) -- cycle;

      \filldraw[fill=custom_blue!25, draw=custom_blue!60, fill opacity=0.7] (1,1,0) -- (1,0,1) -- (0,1,1) -- cycle;
      \filldraw[fill=custom_blue!25, draw=custom_blue!60, fill opacity=0.6] (1,0,0) -- (1,0,1) -- (1,1,0) -- cycle;
      \filldraw[fill=custom_blue!25, draw=custom_blue!60, fill opacity=0.6] (0,1,0) -- (0,1,1) -- (1,1,0) -- cycle;
      \filldraw[fill=custom_blue!25, draw=custom_blue!60, fill opacity=0.6] (0,0,1) -- (1,0,1) -- (0,1,1) -- cycle;
      \draw[custom_blue!60] (1,1,0) -- (1,0,1) -- (0,1,1) -- cycle;

      \fill[black] (0,0,0) circle (0.5pt);
      \node[font=\scriptsize, xshift=3pt, yshift=-2pt] at (0,0,-0.05) {$(0{,}0{,}0)$};

      \fill[black] (1,0,0) circle (0.5pt);
      \node[font=\scriptsize, xshift=3pt] at (1,-0.05,-0.1) {$(1{,}0{,}0)$};

      \fill[black] (0,1,0) circle (0.5pt);
      \node[font=\scriptsize, xshift=-10pt, yshift=2pt] at (0,1,0) {$(0{,}1{,}0)$};

      \fill[black] (0,0,1) circle (0.5pt);
      \node[font=\scriptsize, xshift=2pt, yshift=6pt] at (0,0,1) {$(0{,}0{,}1)$};

      \fill[black] (1,1,0) circle (0.5pt);
      \node[font=\scriptsize, xshift=2pt, yshift=-6pt] at (1,1,0) {$(1{,}1{,}0)$};

      \fill[black] (1,0,1) circle (0.5pt);
      \node[font=\scriptsize, xshift=3pt, yshift=6pt] at (1,0,1) {$(1{,}0{,}1)$};

      \fill[black] (0,1,1) circle (0.5pt);
      \node[font=\scriptsize, xshift=-10pt, yshift=6pt] at (0,1,1) {$(0{,}1{,}1)$};

      \fill[black] (1,1,1) circle (0.5pt);
      \node[font=\scriptsize, xshift=-8pt, yshift=6pt] at (1,1,1) {$(1{,}1{,}1)$};
    \end{tikzpicture}
    \vspace{0.5em}
  \end{minipage}
  \caption{Visualization of $\mathcal{C}_k$ for $d_{\text{out}}=3$, $d_{\text{in}}=1$. \textbf{Left}: $k=1$, \textbf{Right}: $k=2$.}
  \label{fig:mk_polytope}
\end{figure}
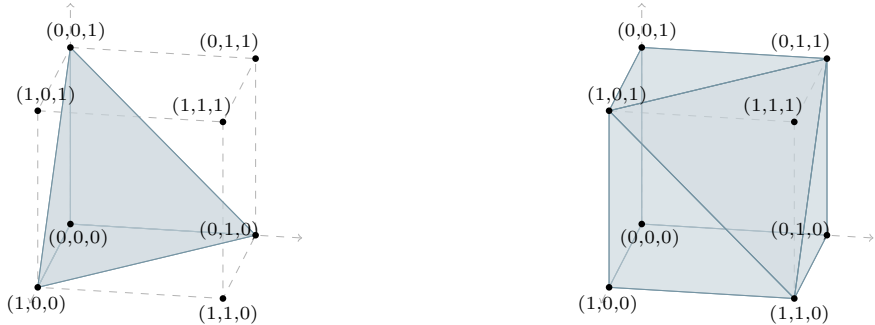

Our method, which we term \glsshort{sparsefw}, reduces the per-layer pruning error by up to 80\% compared to state-of-the-art methods such as Wanda \citep{Sun2023}, and outperforms them on benchmark GPT architectures such as Qwen 2.5, LLaMA 3, Yi 1.5, and Gemma 2, with consistent gains in final WikiText perplexity and zero-shot accuracy. \glsshort{sparsefw} is efficient, requires little memory overhead, easily adapts to unstructured and semi-structured sparsity patterns, is simple to implement, and scales to large models. Furthermore, unlike competing methods, \glsshort{sparsefw} comes with strong theoretical justification: we show that, combined with the convergence guarantees of \gls{fw}, rounding the relaxed solution to integrality yields an approximate solution to the original combinatorial problem.

\textbf{Contributions.} We summarize our contributions as follows.
\begin{enumerate}
\item \textbf{\glsshort{sparsefw}: A projection-free method for layerwise pruning.} We formulate the layerwise mask selection problem as a convex program over the convex hull of binary masks and propose to solve it with the \glsentryfull{fw} algorithm, which is projection-free and leverages an efficient \gls{lmo} that naturally yields sparse updates. \glsshort{sparsefw} is memory-efficient, simple to implement, scales to large models, and can be used to induce both unstructured and semi-structured sparsity patterns.
\item \textbf{Strong empirical performance at \gls{llm} scale.} \glsshort{sparsefw} reduces the per-layer pruning error by up to 70\% compared to state-of-the-art methods such as \glsshort{wanda}, and delivers consistent gains in final WikiText perplexity and zero-shot accuracy across modern GPT architectures (e.g., Qwen 2.5, LLaMA 3, Yi 1.5, Gemma 2).
\item \textbf{Theoretical guarantees.} We provide approximation guarantees that connect the relaxed solution returned by \gls{fw} after rounding to integrality to an approximate solution of the original combinatorial mask selection problem.
\end{enumerate}

Our work demonstrates that classical constrained optimization techniques are not only feasible for pruning \glspl{llm} but can drastically improve upon state-of-the-art performance. %

\textbf{Related work.}
\emph{Pruning after training} \citep{Hoefler2021} is among the most popular approaches to reduce the resource demands of neural networks during inference. \emph{Magnitude pruning} \citep{Janowsky1989, Han2015} is the de facto default pruning criterion for convolutional architectures, and has been shown to yield pruned models that perform competitively, despite its simplicity \citep{Gale2019, Zimmer2021}. Various other criteria exist to decide which weights to consider unimportant \citep[cf.][]{LeCun1989, Hassibi1992, Molchanov2016, Yeom2019}. With the rise of \glspl{llm}, magnitude pruning is being replaced by criteria that account for the peculiarities of \glspl{llm} \citep[in particular, large activation outliers, cf. e.g.][]{Dettmers2022, Yin2023a} and that aim to avoid requiring retraining \citep{Kwon2022, Frantar2023a, Sun2023}, which is generally considered computationally prohibitive for large models. Most importantly for our work, \glsshort{sparsegpt} \citep{Frantar2023a}, \glsshort{wanda} \citep{Sun2023}, and \glsshort{ria} \citep{Zhang2024} address the mask selection problem \labelcref{eq:P-compression} using a greedy pruning approach, where the selection of weights to prune is performed iteratively. Our approach, on the other hand, relaxes the combinatorial constraint and takes weight interactions into account.

\emph{\glsentryfull{fw}} or \emph{conditional gradient} algorithms \citep{Frank1956, Levitin1966} are widely used in Machine Learning for handling complex structural requirements efficiently \citep{LacosteJulien2013, Zeng2014, Frandi2015, Jaggi2013, Negiar2020}, with numerous theoretical works \citep{LacosteJulien2016, Hazan2016, Reddi2016} and accelerated variants \citep{Hazan2016, Yurtsever2019, Shen2019, Combettes2020, Mokhtari2018, Chen2018} appearing in the literature. For a comprehensive review, see \citet{Braun2022}. Recently, \gls{fw} has been applied in the context of neural networks \citep{Ravi2018, Xie2019, Berrada2018, Tsiligkaridis2020}, for training neural networks at scale \citep{Pokutta2020, pethick2025training}, and \citet{miao2022} as well as \citet{Zimmer2022} use \gls{fw}-variants for inducing sparsity throughout pretraining.

\section{Methodology}\label{sec:methodology}
We begin by discussing the preliminaries and demonstrating that three state-of-the-art \gls{llm} pruning methods, namely SparseGPT, Wanda, and RIA, address the mask selection problem \labelcref{eq:P-compression} using a greedy pruning approach. We then introduce the \gls{fw} algorithm and our proposed method, \glsshort{sparsefw}. Throughout this section, we use lowercase letters for scalars and vectors and uppercase letters for matrices ($W$, $X$, $M$). Matrix entries are denoted $W_{ij}$ for the element in row $i$, column $j$. Rows of matrices are denoted with lowercase subscripts: $w_i$ represents the $i$-th row of matrix $W$. We use slicing notation, e.g., $X_{j,:}$ denotes the $j$-th row of matrix $X$.

\subsection{Preliminaries and greedy methods}
Before discussing \glsshort{sparsegpt}, \glsshort{wanda}, and \glsshort{ria} in detail, we first note that the objective in \cref{eq:P-compression} decomposes into a sum of $d_{out}$ row-wise quadratic functions
\begin{equation}\label{eq:P-compression-reparam-rowwise}
  \|W X - (M \odot W) X\|_F^2= \sum_{i=1}^{d_{\text{out}}} \|\, (w_i - m_i \odot w_i)\, X \|_2^2,
\end{equation}
with $w_i \in \R^{d_{in}}$ and $m_i \in \set{0,1}^{d_{in}}$ denoting the $i$-th row of $W$ and $M$, respectively. Under unstructured sparsity, the constraint in \labelcref{eq:P-compression} couples the rows, making the problem non-separable. In contrast, semi-structured patterns such as $n\text{:}m$ (prune $M\!-\!N$ per block of $M$ weights) enforce equal per-row sparsity levels and hence fully decouple the rows. For simplicity, we will mainly discuss the row-wise formulation of \cref{eq:P-compression-reparam-rowwise} and drop the index $i$. We now analyze how \glsshort{sparsegpt}, \glsshort{wanda}, and \glsshort{ria} tackle the mask selection problem \labelcref{eq:P-compression} through greedy pruning—removing one weight at a time. These methods are optimal for their single-weight pruning objective, effectively bypassing weight interactions to simplify the problem.

\emph{\glsshort{sparsegpt}} \citep{Frantar2023a} is arguably the most popular approach and is largely based on preceding work \citep{Frantar2022a} of the authors. In practice, it prunes small blocks of weights at a time to ensure scalability to large models, instead of single weights in isolation as suggested by the theory; we briefly describe the underlying approach based on single-weight pruning. Instead of focusing solely on mask selection, \glsshort{sparsegpt} approximates the problem of finding a sparse replacement $\hat{w}$ for the weight vector $w$, thus combining the problems of mask selection and reconstruction of remaining weights by solving
\begin{equation}\label{eq:P-full-pruning}
  \min_{\hat{w}}\Vert w^\top X - \hat{w}^\top X\Vert_F^2, \quad \text{s.t. } \norm{\hat{w}}_0 \leq k.
\end{equation}
Since solving this problem exactly is intractable, SparseGPT follows a greedy procedure to approximately solve it: at each step it finds the optimal \emph{single} weight to prune and the corresponding optimal remaining weights, i.e., it solves
\begin{equation}
\min_{\hat{w}, q\in [d_{\text{in}}] \text{ s.t. } e_q^{\top}\hat{w}=0}\quad  \norm{(\hat{w} - w)^{\top} X}_2^2.
\end{equation}
The greedy-best weight index $q$ and the optimal weight reconstruction are then given by 
\[
w^* = w - \frac{w_q}{[(XX^{\top})^{-1}]_{qq}}\, (XX^{\top})^{-1} e_q, \text{ where } q \in \arg\min_{q\in [d_{\text{in}}]}\dfrac{w_q^2}{((XX^{\top})^{-1})_{qq}}.
\]

\emph{\glsshort{wanda}} \citep{Sun2023} computes a saliency score $S_{i, j} := |W_{i,j}|\,\norm{X_{j,:}}_2$ for each weight and then prunes the weights with the smallest saliencies. The authors motivate their approach by the observation that in \glspl{llm}, some weights with small magnitudes correspond to large-magnitude features \citep[cf. e.g.][]{Dettmers2022} and that their removal can lead to significant performance drops, despite their small magnitude. Wanda hence multiplies magnitude saliencies by the corresponding input activation norm to avoid pruning such small-but-important weights.

We argue that \glsshort{wanda} can be seen as a greedy approximation to \labelcref{eq:P-compression} and focus on a single row $w$ for simplicity. Again, we write the optimization problem for pruning one variable, but now without modifying the remaining weights:
\begin{equation}
\min_{\hat{w} = (1-e_q)\odot w,\,  q\in [d_{\text{in}}]}\quad \left\{\norm{(\hat{w} - w)^{\top} X}_2^2\right\}
\end{equation}
Plugging the constraints into the objective function directly yields
\begin{equation}
\min_{q\in [d_{\text{in}}]} \left\{\norm{\left ((1-e_q)\odot w) - w\right )^{\top} X}_2^2\right\}= \min_{q\in [d_{\text{in}}]} \left\{w_q^2(XX^{\top})_{qq} \right\} 
\end{equation}
Now note that $w_q^2(XX^{\top})_{qq} = w_q^2\norm{X_{q,:}}_2^2$. Minimizing the latter over $q$ is equivalent to minimizing $|w_q|\,\norm{X_{q,:}}_2$, which is exactly the saliency score of \glsshort{wanda}.

While it might seem that this procedure differs from \glsshort{wanda}, as \glsshort{wanda} computes saliency scores once for all weights and not iteratively, the approaches are identical since the saliency scores do not change after pruning a weight. \glsshort{wanda} further enforces row-wise sparsity rather than unstructured sparsity, pruning a fixed number of weights per row. This has been found beneficial for \glspl{llm} \citep{Sun2023}; the same does not hold for other transformer-like models.

\emph{\glsshort{ria}} \citep{Zhang2024} builds upon \glsshort{wanda} and uses the following saliency score: 
\begin{equation}
S^{\text{RIA}}_{ij} := | W_{ij}|\left(\frac{1}{\sum_{k=1}^{d_{\text{in}}}| W_{ik}|} + \frac{1}{\sum_{k=1}^{d_{\text{out}}}| W_{kj}|}\right)\,\|X_{j,:}\|_2.
\end{equation}
We employ full-matrix notation since \glsshort{ria} fundamentally depends on the matrix structure for its row- and column-wise renormalization. Letting $W'$ denote the rescaled weight matrix with entries
\begin{equation*}
W'_{ij}:= W_{ij}\left(\frac{1}{\sum_{k=1}^{d_{\text{in}}}| W_{ik}|} + \frac{1}{\sum_{k=1}^{d_{\text{out}}}| W_{kj}|}\right).
\end{equation*}
Applying Wanda on $W'$ to prune the weights with the smallest saliency scores yields
\begin{equation}
 |W'_{ij}|\,\|X_{j,:}\|_2 =: S^{\text{RIA}}_{ij},
\end{equation}
which is exactly the saliency score of \glsshort{ria}. The \glsshort{ria} criterion can be interpreted as using the same greedy pruning algorithm as \glsshort{wanda}, but applied to a rescaled weight matrix.

\subsection{Solving the convex relaxation with Frank-Wolfe}
We present an alternative approach to the greedy approximations discussed in the previous section, which is based on relaxing the combinatorial constraints to obtain a convex optimization problem, instead of trying to make the problem tractable by making the pruning decision on a per-weight basis. We solve the convex problem using the \gls{fw} algorithm, which we introduce in the following.

\textbf{The Frank-Wolfe Algorithm.} When minimizing some objective function $\mathcal L$ over a set of constraints $\mathcal C$, a classical approach is \gls{pgd} which iteratively performs a gradient step and then projects the result back to the constraint set to ensure feasibility of the iterates. However, depending on $\mathcal C$, this projection step may not admit an analytic solution and can be computationally expensive \citep{Jaggi2013, Combettes2021}. The \gls{fw} algorithm is an alternative which is projection-free and often yields solutions with desirable structure. Instead of moving along the gradient direction and then requiring a projection step, \gls{fw} moves towards the boundary point of the feasible region that is best aligned with the descent direction. 
Specifically, in each iteration $t$ and at iterate $M_t$, \gls{fw} calls a \glsentryfull{lmo} on the gradient $\nabla \mathcal L(M_t)$ of $\mathcal L$ at $M_t$ to solve 
\begin{equation}\label{eq:LMO}
 V_t = \argmin_{V \in \C} \langle V, \nabla \mathcal L(M_t)\rangle,
\end{equation}
which is then used to update the parameters using the convex combination
\begin{equation}\label{eq:FWUpdate}
M_{t+1} \leftarrow (1-\eta_t)M_t + \eta_t V_t,
\end{equation}
where $\eta_t \in [0,1]$ is the step size. Throughout this work, we stick to the learning rate schedule given by $\eta_t = \frac{2}{t+2}$. If now $M_0 \in \mathcal C$, then the convex update rule ensures feasibility of all iterates. In practice, solving \cref{eq:LMO} is often much cheaper than performing a projection step. If $\mathcal C$ is further given by the convex hull of a set of points, e.g., the vertices of a polytope, then the solution to \cref{eq:LMO} is attained at one of these points. In each iteration, \gls{fw} moves towards the vertices.

\textbf{Relaxing the combinatorial constraints.} The \gls{fw} algorithm can only be applied to convex constraint sets, which is not the case for \labelcref{eq:P-compression}. We make the problem tractable by relaxing the combinatorial constraints to their convex hull, i.e.,
\begin{equation}
  \label{eq:constraint_set}
  \mathcal{C}_k = \left\{M \in [0,1]^{d_{\text{out}} \times d_{\text{in}}}: \norm{M}_1 \leq k\right\}.
\end{equation}
Given that the objective function of \labelcref{eq:P-compression} is a convex quadratic, this relaxation transforms the combinatorial mask selection problem into a convex optimization problem, which can be solved efficiently using the \gls{fw} algorithm. We restate the reformulation of \labelcref{eq:P-pruning-relaxed} for completeness:
\begin{equation}
  \min_{M \in \mathcal{C}_k}\norm{W X - (M \odot W) X}_F^2.
\end{equation}
This relaxation has the advantage that, unlike the previously discussed greedy approaches, it fully accounts for interactions between weights. However, the solution to the relaxed problem \labelcref{eq:P-pruning-relaxed} is not guaranteed to be feasible for the original problem \labelcref{eq:P-compression}; in \Cref{sec:theoretical-results}, we show that rounding the relaxed solution to integrality yields an approximate solution to the original problem.

\textbf{The sparse Linear Minimization Oracle.} We next discuss how to compute the \gls{lmo} for the feasible set $\mathcal{C}_k$. Note that $\mathcal{C}_k$ is a polytope and can be described as the convex hull of its vertices, which are exactly the binary masks with at most $k$ ones. At any vertex, all coordinates lie on box bounds $0$ or $1$, and the coupling constraint $\sum_{i,j} M_{ij} \le k$ is either inactive (fewer than $k$ ones) or tight (exactly $k$ ones); see \cref{fig:mk_polytope}. Minimizing a linear function over $\mathcal{C}_k$ therefore consists of selecting up to $k$ entries with the most negative coefficients and setting them to one, leaving the rest at zero. Letting $\nabla \mathcal L(M_t) \in \mathbb{R}^{d_{\text{out}} \times d_{\text{in}}}$ denote the gradient of the objective at iterate $M_t$, the \gls{lmo} solution at step $t$ is hence given by
\begin{equation}\label{eq:lmo-mask}
[\text{LMO}\left (\nabla \mathcal L(M_t)\right )]_{ij} = \begin{cases}
  1 & \text{if } (i,j) \in \topk\left (-\nabla \mathcal L(M_t)\right ), [\nabla \mathcal L(M_t)]_{ij} < 0 \\
  0 & \text{otherwise}
  \end{cases}.
\end{equation}
where $\topk(\nabla \mathcal L(M_t))$ denotes the set of indices corresponding to the $k$ entries of $\nabla \mathcal L(M_t)$ with the smallest values. The \gls{lmo} for $\mathcal{C}_k$ can be computed efficiently and naturally produces sparse updates: at most $k$ out of $d_{\text{out}} \cdot d_{\text{in}}$ entries are nonzero. While the above corresponds to unstructured sparsity, the \gls{lmo} can be adapted to per-row sparsity and $n\text{:}m$ sparsity; see \cref{sec:structured_lmos}.

\subsection{The \glsshort{sparsefw} algorithm}
We present the full \glsshort{sparsefw} algorithm in \Cref{alg:sparsefw}. At a high level, for each layer we solve the relaxed optimization problem using the \gls{fw} algorithm, starting from any binary mask that satisfies the sparsity constraints. After running for $T$ iterations, we threshold the learned mask—whose entries lie in $[0,1]$—to obtain a binary mask that meets the original sparsity constraints. The objective function and the gradient with respect to $M_t$ are given by
\begin{align*}
  \mathcal L(M_t) &= \operatorname{Tr}(W(1-M_t)XX^{\top}(1-M_t)^{\top}W^{\top})\\
  \nabla \mathcal L(M_t) &= -2 \cdot W \odot (W XX^{\top} - (W \odot M_t) XX^{\top}).
\end{align*}
Even for small calibration datasets, the activation matrix $X$ can be very large. For example, the largest matrix in a LLaMA-2-7B transformer block (\texttt{up\_proj}) has $d_{in} = 4096$. With $N = 128$ samples and sequence length $L = 4096$, $X$ has dimensions $4096\times 524{,}288$. Because both the objective and the gradient depend only on $G := XX^{\top}$ (which can be computed in batches), we precompute $G := XX^{\top}$ and $H := WG$ once to drastically reduce resource demands. Note that $G$ has dimensions $4096\times 4096$, in contrast to the $4096\times 524{,}288$ dimensions of $X$; this independence of the sequence length $L$ and number of samples $N$ is crucial for efficiency. With $G$ and $H$ precomputed, the gradient requires only two elementwise multiplications, a matrix--matrix multiplication, and a matrix addition:
\begin{equation*}
  \nabla \mathcal L(M_t) = -2 \cdot W \odot (H - (W \odot M_t) G). 
\end{equation*}
In practice, we have to navigate a caveat that we did not detail in \Cref{alg:sparsefw} for the sake of simplicity, exact details are in the appendix. Throughout the experiments, we noticed that while \gls{fw} often substantially reduces pruning error relative to baselines like \glsshort{wanda}, it can still produce worse final perplexity, likely due to a mismatch between local and global objectives. Constraining \gls{sparsefw} by fixing a fraction of very high-saliency weights (e.g., those with highest Wanda scores) as unprunable consistently improves performance. This suggests that Wanda reliably identifies weights that should be preserved, even if a more thorough local optimization would prune them. We therefore fix these weights and apply \gls{fw} to the remaining ones, optimizing over a smaller search space. We ablate the impact of this ratio in \cref{tab:fix-ratio-ablation} in the appendix: Surprisingly, we observe the best consistent improvements when setting $\alpha=0.9$, i.e., fixing 90\% of the highest saliency weights and optimizing only over the remaining 10\%. Even small $\alpha$ values (e.g., $\alpha=0.1$) can yield significant perplexity improvements. On the other hand, setting $\alpha=0.0$ (full \gls{fw} without any fixed weights) consistently yields worse results than the baselines.

\begin{algorithm}
  \caption{\gls{sparsefw}}
  \label{alg:sparsefw}
  \begin{algorithmic}[1]
     \Require Weight matrix $W$, input $X$, no.\ of nonzero entries $k$, iterations $T$, warm-start mask $M_0$
     \vspace{0.2em}
     \hrule
     \vspace{0.2em}
  \State $G = XX^{\top}$, $H = W G$ \Comment{Precompute buffers}
  \For{$t = 0$ to $T-1$}
      \State $\nabla \mathcal L(M_{t}) = -2 \cdot W \odot (H - (W \odot M_{t}) G)$ \Comment{Compute gradient}
      \State $V_{t} = \text{LMO}\big(\nabla \mathcal L(M_{t}), \mathcal{C}_{k}\big)$ \Comment{Compute LMO}
      \State $\eta_t \leftarrow \frac{2}{t+2}$
      \State $M_{t+1} \leftarrow (1-\eta_t)M_{t} + \eta_t V_{t}$ \Comment{FW Update}
  \EndFor
  \State $[M]_{ij} \gets \begin{cases} 1 & \text{if } (i,j) \in \topk(M_T) \\ 0 & \text{otherwise} \end{cases}$ \Comment{Threshold}\label{line:threshold}\\
  \Return $M$
  \end{algorithmic}
  \end{algorithm}

\section{Experimental Results}\label{sec:experimental-results}
We present our experimental methodology; our code will be made publicly available to ensure reproducibility. Our focus is on language modeling and we utilize pretrained models from HuggingFace \citep{Wolf2020}, including \emph{LLaMA-3.1-8B} \citep{grattafioriLlama3Herd2024}, \emph{Gemma-2-9B} \citep{riviereGemma2Improving2024}, \emph{Yi-1.5-9B} \citep{youngYiOpenFoundation2025}, \emph{DeepSeek-7B-base} \citep{biDeepSeekLLMScaling2024}, and \emph{Qwen2.5-7B} \citep{yangQwen25TechnicalReport2025}. For the calibration set, we randomly sample 2048-token sequences from the \emph{C4} dataset \citep{Raffel2020a}. For validation, we select 100 sequences from the validation split. We evaluate performance using perplexity on \emph{WikiText} \citep{Merity2016} and zero-shot accuracy on the EleutherAI evaluation set \citep{Gao2023}. Following \citet{Sun2023}, we prune all linear layers with a uniform sparsity allocation across layers, while keeping the embedding and final linear head dense. \gls{sparsefw} is compared with \glsshort{wanda} and \glsshort{ria}, as these methods also aim to find a better pruning mask by solving \labelcref{eq:P-compression}; we hence do not compare directly to methods that involve a reconstruction step, such as SparseGPT \citep{Frantar2023a}. We report results for both unstructured and semi-structured sparsity \citep{Mishra2021}.

\textbf{SparseFW outperforms state-of-the-art mask selection methods.} In \Cref{tab:extended_models}, we compare \glsshort{sparsefw} (warm-started from \glsshort{wanda} or \glsshort{ria}) to the respective baselines across five state-of-the-art GPTs and multiple sparsity regimes (50\%, 60\%, and 2:4). \glsshort{sparsefw} generally performs on par with or better than the baselines in terms of perplexity; for zero-shot accuracy, \gls{sparsefw} consistently outperforms competing methods. We generally observe much more consistent and bigger improvements in the higher sparsity regimes than for 50\% sparsity.

\textbf{\gls{sparsefw} successfully optimizes the matrix-wise pruning objective.}
We observe consistent improvement in terms of the local pruning objective over both Wanda and RIA warmstarts. \Cref{fig:reconstruction_improvement} shows the per-layer reductions relative to a Wanda Warmstart, where we observe reductions of up to 80\%. In general, we found the average relative reduction over the layers to range between 20\% and 40\% across the different models, sparsity regimes and warmstarts. 

\begin{figure}[t]
  \includegraphics[width=\textwidth]{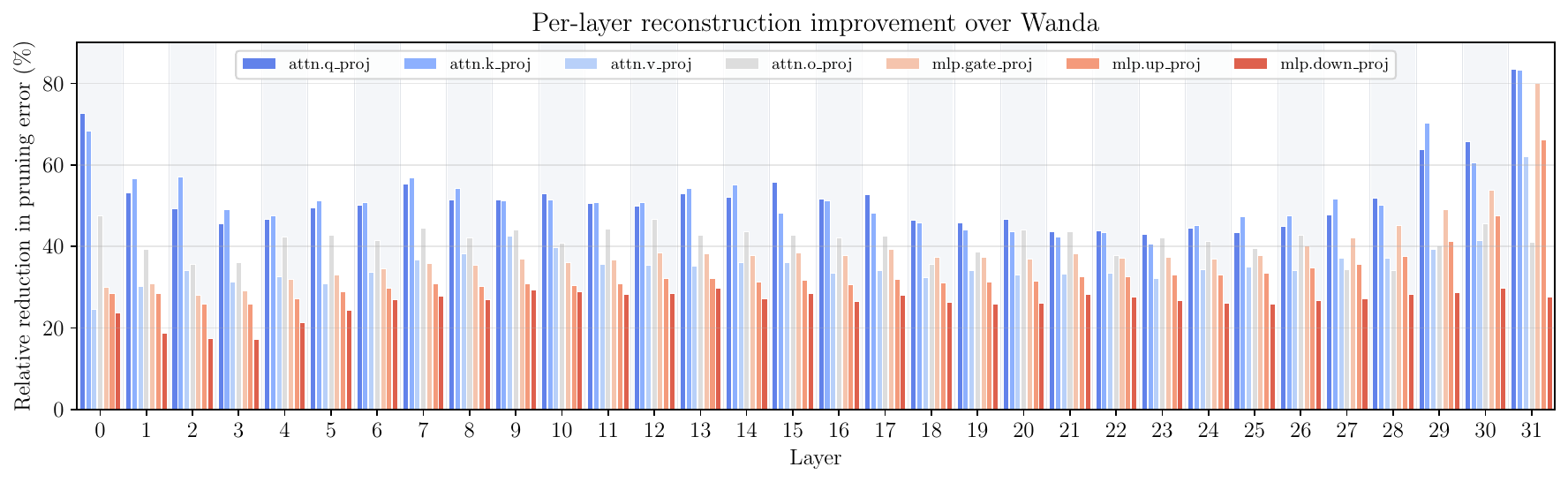}
  \caption{LLaMA-3.1-8B pruned to 60\% unstructured sparsity with \gls{sparsefw} using Wanda warmstart with 256 samples. This figure shows the relative reduction in pruning error (y-axis) for each matrix type (see legend for colors) for all layers of the model (x-axis) compared to the warmstart mask.}
        \label{fig:reconstruction_improvement}
\end{figure}

\begin{figure}[h]
  \centering
  \includegraphics[width=0.4\textwidth]{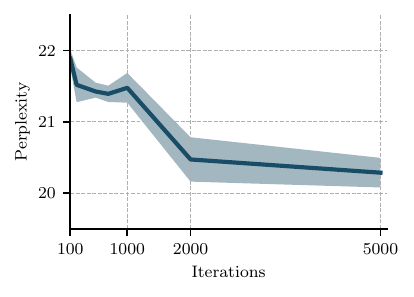}
  \includegraphics[width=0.4\textwidth]{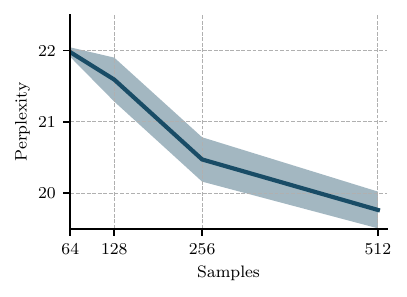}
  \caption{LLaMA-3.1-8B pruned to $2{:}4$ sparsity using \gls{sparsefw}. Left: Perplexity over the number of \gls{sparsefw} iterations per layer with 256 samples. Right: Perplexity over the number of calibration samples with 2000 \gls{sparsefw} iterations per layer. The solid curve represents the mean over multiple random seeds, the shaded regions represent the min-max range.}
  \label{fig:iter-sample-ablation}
\end{figure}

\textbf{Sample and iteration efficiency.} \Cref{fig:iter-sample-ablation} ablates the impact of the number of \gls{sparsefw} iterations (left) and the number of calibration samples (right). Fixing the amount of samples at 256, perplexity decreases up to around 2000 iterations and then flattens. We therefore use 2000 iterations throughout. In contrast, at a fixed 2000 iterations, increasing the number of calibration samples from 64 to 512 brings substantial additional perplexity gains. This trend contrasts with \glsshort{wanda}, whose performance does not seem to increase significantly with additional calibration data: increasing the sample count from 64 to 512 leads to a perplexity decrease from 25.1 to only 24.6 for \glsshort{wanda}. Overall, \gls{sparsefw} is clearly more compute-intensive than \glsshort{wanda} and \glsshort{ria}, but we argue that spending more resources once to improve the performance of pruned models is, given that deployed \glspl{llm} now serve millions of users and inference costs scale with the number of requests, worthwhile. That being said, the results of \Cref{fig:iter-sample-ablation} indicate clear benefits of increasing the number of samples while keeping the number of iterations fixed and relatively low. While more samples require slightly more compute to build the matrix $G = XX^{\top}$, the cost of a single \gls{fw} iteration is independent of the sample count.

\begin{table}
  \caption{Perplexity ($\downarrow$, lower is better) and zero-shot accuracy ($\uparrow$, higher is better) comparison. We report \gls{sparsefw} performance with Wanda and RIA warmstart for unstructured 50\% and 60\% sparsity and semi-structured $2{:}4$ sparsity after 2000 iterations using 256 samples compared to the baseline warmstarts. We indicate the \gls{sparsefw} warmstart method in parentheses. Best values are highlighted in bold. We omit standard deviations for legibility.\\ }
  \label{tab:extended_models}
  \centering
  \small
  \begin{tabular}{@{}lccccccc@{}}
\toprule
 \textbf{Perplexity} ($\downarrow$) &  & \textbf{\textsc{Gemma-2}} & \textbf{\textsc{Yi-1.5}} & \textbf{\textsc{DeepSeek-7}} & \multicolumn{2}{c}{\textbf{\textsc{Qwen2.5}}} & \textbf{\textsc{Llama-3}} \\\cmidrule{3-8}
Method &Sparsity & 9B & 9B & 7B & 7B & 14B & 8B \\
\midrule
Wanda & \multirow{4}{*}{50\%} &11.19 & 6.58 & \textbf{7.79} & 8.45 & 7.11 & 10.09 \\
RIA &  &11.19 & 6.71 & 7.90 & 8.54 & 7.01 & \textbf{9.88} \\
\gls{sparsefw} (Wanda) &  &\textbf{10.67} & 6.58 & 7.89 & 8.35 & 7.10 & 10.21 \\
\gls{sparsefw} (RIA) &  &10.77 & \textbf{6.53} & 7.93 & \textbf{8.22} & \textbf{6.98} & 9.95 \\ \midrule
Wanda & \multirow{4}{*}{60\%} &16.46 & 11.38 & \textbf{11.44} & 13.47 & 10.87 & 21.53 \\
RIA & &17.17 & 14.37 & 11.87 & 12.86 & 9.78 & 19.14 \\
\gls{sparsefw} (Wanda) &  &\textbf{14.83} & \textbf{10.56} & 11.99 & 12.44 & 10.28 & \textbf{17.97} \\ 
\gls{sparsefw} (RIA) &  &15.07 & 10.67 & 12.41 & \textbf{11.66} & \textbf{9.65} & 18.16 \\ \midrule
Wanda & \multirow{4}{*}{2:4} &17.41 & 11.58 & 11.76 & 14.40 & 11.37 & 24.82 \\
RIA &  &16.78 & 11.27 & 12.04 & 13.46 & \textbf{10.98} & 23.7 \\
\gls{sparsefw} (Wanda) &  &\textbf{15.81} & 10.61 & \textbf{11.73} & 14.16 & 11.82 & \textbf{20.45} \\
\gls{sparsefw} (RIA) &  &15.83 & \textbf{10.35} & 11.91 & \textbf{13.42} & 11.20 & 21.31 \\ \midrule
\textbf{Accuracy in \%} ($\uparrow$) &  & \textbf{\textsc{Gemma-2}} & \textbf{\textsc{Yi-1.5}} & \textbf{\textsc{DeepSeek-7}} & \multicolumn{2}{c}{\textbf{\textsc{Qwen2.5}}} & \textbf{\textsc{Llama-3}} \\\cmidrule{3-8}
Method &Sparsity & 9B & 9B & 7B & 7B & 14B & 8B \\\midrule
Wanda & \multirow{4}{*}{50\%} &68.44 & 61.04 & 56.67 & 63.72 & 67.94 & 58.78 \\
RIA &  &\textbf{68.71} & 61.22 & 55.76 & 64.03 & 67.83 & 58.94 \\
\gls{sparsefw} (Wanda) & &68.42 & 62.49 & \textbf{56.8} & 64.97 & \textbf{69.44} & \textbf{60.17} \\
\gls{sparsefw} (RIA) & &68.67 & \textbf{62.53} & 56.24 & \textbf{65.34} & 69.19 & 59.63 \\ \midrule
Wanda & \multirow{4}{*}{60\%} &63.19 & 53.7 & 50.51 & 59.44 & 63.58 & 48.08 \\
RIA & &63.19 & 53.7 & 50.51 & 59.44 & 63.58 & 48.08 \\
\gls{sparsefw} (Wanda) & &64.46 & \textbf{54.90} & 50.56 & 61.13 & 65.59 & \textbf{51.92} \\
\gls{sparsefw} (RIA) & &\textbf{65.35} & 55.41 & \textbf{50.65} & \textbf{61.52} & \textbf{65.80} & 52.15 \\ \midrule
Wanda & \multirow{4}{*}{2:4} &63.75 & 52.92 & 50.65 & 59.11 & 63.39 & 47.13 \\
RIA &  &63.83 & 52.41 & 51.08 & 58.48 & 63.85 & 47.77 \\
\gls{sparsefw} (Wanda) &  &63.81 & \textbf{53.78} & \textbf{51.12} & \textbf{60.15} & 64.12 & 48.43 \\
\gls{sparsefw} (RIA) &  &\textbf{63.90} & 52.54 & 50.69 & \textbf{60.15} & \textbf{64.35} & \textbf{48.54} \\
\bottomrule
  \end{tabular}

\end{table}

\section{Theoretical results}\label{sec:theoretical-results}
In this section, we state a data-dependent error guarantee for the mask produced by \gls{sparsefw} with respect to the original pruning objective \labelcref{eq:P-compression}. This is a key benefit of \gls{sparsefw} over greedy heuristics, which can yield suboptimal solutions even though the objective function is convex. We state our main result informally here, deferring full statements and proofs to the appendix. 

\begin{lemma}[Informal]\label{lem:sparsefw-optimality-informal}
After $T$ iterations of \gls{sparsefw}, the resulting mask $M$ satisfies
\begin{equation*}
  \mathcal L(M) -  \mathcal L(M^*) \le \lambda_{\max}\left (Q \right ) \left( \frac{k}{T} + 2\left(k + \sqrt{2 d_\text{in}d_\text{out} k}\right) \right)
\end{equation*}
where $M^*$ is an optimal mask for \labelcref{eq:P-compression}, $k$ is the maximum number of nonzeros in the mask, $Q$ represents the Hessian of the objective function and $\lambda_{\max}(Q)$ its largest eigenvalue.
\end{lemma}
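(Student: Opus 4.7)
The plan is to split $\mathcal L(M)-\mathcal L(M^\star)$ into an optimization error on the relaxation \labelcref{eq:P-pruning-relaxed} and a rounding error incurred by the top-$k$ thresholding at the end of \Cref{alg:sparsefw}. Let $M^\star$ be optimal for \labelcref{eq:P-compression} and $M^\star_c$ be optimal for \labelcref{eq:P-pruning-relaxed}, and write
\begin{equation*}
\mathcal L(M) - \mathcal L(M^\star)
= \bigl[\mathcal L(M) - \mathcal L(M_T)\bigr]
+ \bigl[\mathcal L(M_T) - \mathcal L(M^\star_c)\bigr]
+ \bigl[\mathcal L(M^\star_c) - \mathcal L(M^\star)\bigr].
\end{equation*}
The third bracket is nonpositive, because any binary mask with $\|M\|_0\le k$ automatically satisfies $\|M\|_1\le k$ and hence lies in $\mathcal C_k$, so $M^\star_c$ is at least as good as $M^\star$ on the relaxation.

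For the second bracket I would invoke the vanilla \gls{fw} convergence bound $\mathcal L(M_T)-\mathcal L(M^\star_c)\le 2LD^2/(T+2)$ for smooth convex problems over compact convex sets. Since $\mathcal L$ is a convex quadratic with Hessian $Q$, its smoothness constant is $L=\lambda_{\max}(Q)$. The Euclidean diameter of $\mathcal C_k$ satisfies $D^2\le 4k$ because any $M\in\mathcal C_k$ obeys $\|M\|_F^2 = \sum_{ij}M_{ij}^2 \le \sum_{ij}M_{ij} = \|M\|_1\le k$ by the elementary inequality $x^2\le x$ on $[0,1]$. This produces the $\lambda_{\max}(Q)\,k/T$ contribution, with a universal multiplicative constant absorbed into the informal statement.

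The crux of the argument, and my expected main obstacle, is the rounding bound. I would exploit that $\mathcal L$ is exactly quadratic to write
\begin{equation*}
\mathcal L(M) - \mathcal L(M_T) = \langle \nabla \mathcal L(M_T),\, M-M_T\rangle + \tfrac12(M-M_T)^\top Q (M-M_T),
\end{equation*}
so the quadratic part is at most $\tfrac{\lambda_{\max}(Q)}{2}\|M-M_T\|_F^2$. The key estimate is then $\|M-M_T\|_F^2\le 2k$, which I would derive by splitting the squared Frobenius sum over the top-$k$ support $S$ (where $M_{ij}=1$) and its complement and applying $x^2\le x$ on $[0,1]$: for $(i,j)\in S$ use $(1-[M_T]_{ij})^2\le 1-[M_T]_{ij}$, for $(i,j)\notin S$ use $[M_T]_{ij}^2\le [M_T]_{ij}$; summing and using $|S|=k$ together with $\|M_T\|_1\le k$ yields the claim. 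For the linear part, I would apply Cauchy-Schwarz with the elementary gradient bound $\|\nabla \mathcal L(M_T)\|_F\le \lambda_{\max}(Q)\|\mathbf 1-M_T\|_F\le \lambda_{\max}(Q)\sqrt{d_{\text{in}}d_{\text{out}}}$, using that $\mathbf 1$ is an unconstrained minimizer of $\mathcal L$ (the residual vanishes there) and the Hessian has operator norm $\lambda_{\max}(Q)$. Multiplying by $\|M-M_T\|_F\le\sqrt{2k}$ produces the $\lambda_{\max}(Q)\sqrt{2d_{\text{in}}d_{\text{out}}\,k}$ contribution, and summing the two pieces recovers the rounding bound $\lambda_{\max}(Q)(k+\sqrt{2d_{\text{in}}d_{\text{out}}\,k})$ stated in the lemma.

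The genuinely delicate step is getting $\|M-M_T\|_F^2\le 2k$ rather than a dimension-dependent bound: only the combination of the $L_1$-budget $\|M_T\|_1\le k$ with the pointwise $x^2\le x$ inequality on the box delivers the linear-in-$k$ rounding cost, without which the sparsity scaling would be lost and the guarantee would be vacuous for the regimes of interest. Everything else—convexity of $\mathcal L$, feasibility of $M^\star$ for the relaxation, and FW smoothness/diameter bookkeeping—is routine.
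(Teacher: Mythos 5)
Your proposal is correct and follows essentially the same route as the paper's proof: the same three-term decomposition (with the relaxed optimum dominating the integral one), the same exact quadratic expansion around $M_T$ whose cross term is handled by Cauchy--Schwarz with $\nabla\mathcal L(M_T)$ expressed through the residual $\mathbf 1-M_T$, and the same key estimate $\|M-M_T\|_F^2\le\|M-M_T\|_1\le 2k$ via $x^2\le x$ on $[0,1]$. The only differences are cosmetic: you bound $\|\mathbf 1-M_T\|_F$ by $\sqrt{d_{\text{in}}d_{\text{out}}}$ (matching the informal statement) where the paper's formal row-wise version uses the slightly tighter $\sqrt{d_{\text{in}}-k}$, and you explicitly justify the $k\lambda_{\max}(Q)/T$ optimization term via the Frank--Wolfe rate with diameter bound $D^2\le 4k$, which the paper merely asserts.
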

Note that $Q$ is not equal to $G = XX^{\top}$, the latter being the Hessian of the objective w.r.t.\ reconstruction of the weights, not w.r.t.\ the mask. The bound captures two sources of error: (i) the \emph{optimization error} from solving the relaxed problem \labelcref{eq:P-pruning-relaxed}, and (ii) the \emph{thresholding error} from converting a relaxed solution to a binary mask (Line~\ref{line:threshold} in \cref{alg:sparsefw}).

\emph{Optimization error}. After $T$ iterations of the \gls{fw} algorithm, the resulting (continuous, not-yet-thresholded) mask $M_T$ satisfies 
\begin{equation*}
  \mathcal L(M_T)- \mathcal L(\hat M) \le k \lambda_{\max}(Q)/T,
\end{equation*}
where $\hat M$ is an optimal solution to the relaxed problem \labelcref{eq:P-pruning-relaxed}. In other words, by increasing the number of iterations $T$, \gls{fw} can guarantee an arbitrarily small optimization error.

\emph{Thresholding error}. The error due to thresholding can be controlled by the curvature of the objective (captured by $\lambda_{\max}(Q)$) and the distance between the fractional iterate and its thresholded version, which in turn can be upper bounded in terms of $k$ and the dimension of the input space $d_\text{in}d_\text{out}$. %

These insights explain the empirical behavior in \Cref{fig:normalized_fw_error_ablation}. The left panel reports the relative pruning error reduction (higher is better) versus \gls{fw} iterations for the continuous and thresholded masks. After a short initial drop, due to the large stepsize, the continuous iterate improves consistently, as predicted by the \gls{fw} convergence guarantee. In contrast, the thresholded mask first degrades as the thresholding error grows while the iterate moves through the interior of $\mathcal{C}_k$. This is reflected in the right panel, which shows the average threshold residual (the $\norm{\cdot}_1$ distance between the continuous and thresholded masks): It first rises steeply, then decreases and eventually plateaus above zero. As long as the relaxed solution is not at a vertex, the thresholding error remains nonzero, so the thresholded curve does not fully catch up to the continuous one.

\begin{figure}
\centering
\includegraphics[width=0.4\textwidth]{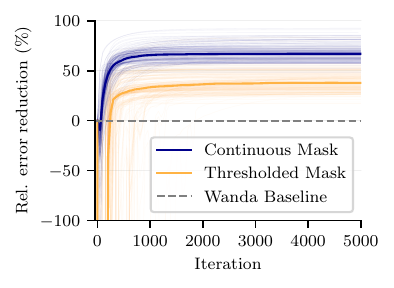}
\includegraphics[width=0.4\textwidth]{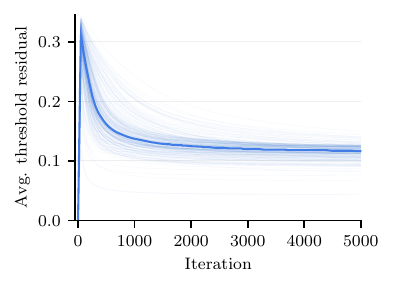}
\caption{LLaMA-3.1-8B optimized towards 60\% unstructured sparsity with \gls{sparsefw} using 256 calibration samples. Lightly colored curves show the results individual matrices; the solid curve is their median. Left: Relative pruning error reduction versus \gls{fw} iterations for continuous and thresholded masks. Right: Average threshold residual (mean $\ell_1$ distance between continuous and thresholded masks) versus iterations.}
    \label{fig:normalized_fw_error_ablation}
\end{figure}

\section{Conclusion}\label{sec:conclusion}
Solving the pruning mask selection problem for \glspl{llm} is a hard combinatorial problem. In this work, we relax the binary constraints to their convex hull and solve the resulting convex problem with the \gls{fw} algorithm; we call this approach \glsshort{sparsefw}, a simple and memory-efficient layerwise method that explicitly accounts for weight interactions and supports both unstructured and semi-structured sparsity. Across modern GPT architectures, \glsshort{sparsefw} drastically reduces the per-layer reconstruction error and improves perplexity and zero-shot accuracy over state-of-the-art \gls{llm} pruning approaches. Our work demonstrates that classical constrained optimization is a scalable and effective alternative to greedy heuristics for \gls{llm} pruning.

However, our work is not without limitations. Although vanilla \gls{fw} substantially reduces per-layer pruning error, this does not reliably yield lower perplexity. Without fixing part of the mask, it tends to prune weights crucial for overall performance. \gls{sparsefw} successfully mitigates this by preserving a fraction of high-saliency weights from the warmstart, but the local--global objective mismatch persists; inductive biases still appear necessary for improved perplexity.
\section*{Acknowledgements}
This research was partially supported by the DFG Cluster of Excellence MATH+ (EXC-2046/1, project id 390685689) funded by the Deutsche Forschungsgemeinschaft (DFG) as well as by the German Federal Ministry of Education and Research (fund number 01IS23025B).
Alexandre d'Aspremont was funded in part by the french government under management of Agence Nationale de la recherche as part of the “Investissements d’avenir” program, reference ANR-19-P3IA-0001 (PRAIRIE 3IA Institute) and a Google focused award.
\bibliography{max_references,max_zotero_references}

\begin{thebibliography}{54}
\providecommand{\natexlab}[1]{#1}
\providecommand{\url}[1]{\texttt{#1}}
\expandafter\ifx\csname urlstyle\endcsname\relax
  \providecommand{\doi}[1]{doi: #1}\else
  \providecommand{\doi}{doi: \begingroup \urlstyle{rm}\Url}\fi

\bibitem[Berrada et~al.(2018)Berrada, Zisserman, and Kumar]{Berrada2018}
Leonard Berrada, Andrew Zisserman, and M.~Pawan Kumar.
\newblock Deep frank-wolfe for neural network optimization.
\newblock \emph{International Conference on Learning Representations 2019}, November 2018.

\bibitem[Bi et~al.(2024)Bi, Chen, Chen, Chen, Dai, Deng, Ding, Dong, Du, Fu, Gao, Gao, Gao, Ge, Guan, Guo, Guo, Hao, Hao, He, Hu, Huang, Li, Li, Li, Li, Li, Liang, Lin, Liu, Liu, Liu, Liu, Liu, Liu, Lu, Lu, Luo, Ma, Nie, Pei, Piao, Qiu, Qu, Ren, Ren, Ruan, Sha, Shao, Song, Su, Sun, Sun, Tang, Wang, Wang, Wang, Wang, Wang, Wu, Wu, Xie, Xie, Xie, Xiong, Xu, Xu, Xu, Yang, You, Yu, Yu, Zhang, Zhang, Zhang, Zhang, Zhang, Zhang, Zhang, Zhang, Zhao, Zhao, Zhou, Zhou, Zhu, and Zou]{biDeepSeekLLMScaling2024}
Xiao Bi, Deli Chen, Guanting Chen, Shanhuang Chen, Damai Dai, Chengqi Deng, Honghui Ding, Kai Dong, Qiushi Du, Zhe Fu, Huazuo Gao, Kaige Gao, Wenjun Gao, Ruiqi Ge, Kang Guan, Daya Guo, Jianzhong Guo, Guangbo Hao, Zhewen Hao, Ying He, Wenjie Hu, Panpan Huang, Erhang Li, Guowei Li, Jiashi Li, Yao Li, Y.~K. Li, Wenfeng Liang, Fangyun Lin, A.~X. Liu, Bo~Liu, Wen Liu, Xiaodong Liu, Xin Liu, Yiyuan Liu, Haoyu Lu, Shanghao Lu, Fuli Luo, Shirong Ma, Xiaotao Nie, Tian Pei, Yishi Piao, Junjie Qiu, Hui Qu, Tongzheng Ren, Zehui Ren, Chong Ruan, Zhangli Sha, Zhihong Shao, Junxiao Song, Xuecheng Su, Jingxiang Sun, Yaofeng Sun, Minghui Tang, Bingxuan Wang, Peiyi Wang, Shiyu Wang, Yaohui Wang, Yongji Wang, Tong Wu, Y.~Wu, Xin Xie, Zhenda Xie, Ziwei Xie, Yiliang Xiong, Hanwei Xu, R.~X. Xu, Yanhong Xu, Dejian Yang, Yuxiang You, Shuiping Yu, Xingkai Yu, B.~Zhang, Haowei Zhang, Lecong Zhang, Liyue Zhang, Mingchuan Zhang, Minghua Zhang, Wentao Zhang, Yichao Zhang, Chenggang Zhao, Yao Zhao, Shangyan Zhou, Shunfeng Zhou, Qihao Zhu, and Yuheng Zou.
\newblock {{DeepSeek LLM}}: {{Scaling Open-Source Language Models}} with {{Longtermism}}, January 2024.
\newblock URL \url{http://arxiv.org/abs/2401.02954}.

\bibitem[Braun et~al.(2022)Braun, Carderera, Combettes, Hassani, Karbasi, Mokhtari, and Pokutta]{Braun2022}
G{\'a}bor Braun, Alejandro Carderera, Cyrille~W Combettes, Hamed Hassani, Amin Karbasi, Aryan Mokhtari, and Sebastian Pokutta.
\newblock Conditional gradient methods.
\newblock November 2022.
\newblock URL \url{https://conditional-gradients.org/}.

\bibitem[Carderera et~al.(2021)Carderera, Pokutta, Schütte, and Weiser]{Carderera2021}
Alejandro Carderera, Sebastian Pokutta, Christof Schütte, and Martin Weiser.
\newblock Cindy: Conditional gradient-based identification of non-linear dynamics -- noise-robust recovery.
\newblock January 2021.

\bibitem[Chen et~al.(2018)Chen, Harshaw, Hassani, and Karbasi]{Chen2018}
Lin Chen, Christopher Harshaw, Hamed Hassani, and Amin Karbasi.
\newblock Projection-free online optimization with stochastic gradient: From convexity to submodularity.
\newblock In \emph{International Conference on Machine Learning}, pp.\  814--823. PMLR, 2018.

\bibitem[Combettes \& Pokutta(2021)Combettes and Pokutta]{Combettes2021}
Cyrille~W. Combettes and Sebastian Pokutta.
\newblock Complexity of linear minimization and projection on some sets.
\newblock January 2021.

\bibitem[Combettes et~al.(2020)Combettes, Spiegel, and Pokutta]{Combettes2020}
Cyrille~W. Combettes, Christoph Spiegel, and Sebastian Pokutta.
\newblock Projection-free adaptive gradients for large-scale optimization.
\newblock September 2020.

\bibitem[Dettmers et~al.(2022)Dettmers, Lewis, Belkada, and Zettlemoyer]{Dettmers2022}
Tim Dettmers, Mike Lewis, Younes Belkada, and Luke Zettlemoyer.
\newblock Llm.int8(): 8-bit matrix multiplication for transformers at scale.
\newblock August 2022.

\bibitem[Frandi et~al.(2015)Frandi, Nanculef, Lodi, Sartori, and Suykens]{Frandi2015}
Emanuele Frandi, Ricardo Nanculef, Stefano Lodi, Claudio Sartori, and Johan A.~K. Suykens.
\newblock Fast and scalable lasso via stochastic frank-wolfe methods with a convergence guarantee.
\newblock October 2015.

\bibitem[Frank et~al.(1956)Frank, Wolfe, et~al.]{Frank1956}
Marguerite Frank, Philip Wolfe, et~al.
\newblock An algorithm for quadratic programming.
\newblock \emph{Naval research logistics quarterly}, 3\penalty0 (1-2):\penalty0 95--110, 1956.

\bibitem[Frantar \& Alistarh(2023)Frantar and Alistarh]{Frantar2023a}
Elias Frantar and Dan Alistarh.
\newblock Sparsegpt: Massive language models can be accurately pruned in one-shot.
\newblock In \emph{International Conference on Machine Learning}, pp.\  10323--10337. PMLR, 2023.

\bibitem[Frantar et~al.(2022)Frantar, Singh, and Alistarh]{Frantar2022a}
Elias Frantar, Sidak~Pal Singh, and Dan Alistarh.
\newblock Optimal brain compression: A framework for accurate post-training quantization and pruning.
\newblock August 2022.

\bibitem[Gale et~al.(2019)Gale, Elsen, and Hooker]{Gale2019}
Trevor Gale, Erich Elsen, and Sara Hooker.
\newblock The state of sparsity in deep neural networks.
\newblock \emph{arXiv preprint arXiv:1902.09574}, 2019.

\bibitem[Gao et~al.(2023)Gao, Tow, Abbasi, Biderman, Black, DiPofi, Foster, Golding, Hsu, Le~Noac'h, Li, McDonell, Muennighoff, Ociepa, Phang, Reynolds, Schoelkopf, Skowron, Sutawika, Tang, Thite, Wang, Wang, and Zou]{Gao2023}
Leo Gao, Jonathan Tow, Baber Abbasi, Stella Biderman, Sid Black, Anthony DiPofi, Charles Foster, Laurence Golding, Jeffrey Hsu, Alain Le~Noac'h, Haonan Li, Kyle McDonell, Niklas Muennighoff, Chris Ociepa, Jason Phang, Laria Reynolds, Hailey Schoelkopf, Aviya Skowron, Lintang Sutawika, Eric Tang, Anish Thite, Ben Wang, Kevin Wang, and Andy Zou.
\newblock A framework for few-shot language model evaluation, 12 2023.
\newblock URL \url{https://zenodo.org/records/10256836}.

\bibitem[Grattafiori et~al.(2024)Grattafiori, Dubey, Jauhri, Pandey, Kadian, {Al-Dahle}, Letman, Mathur, Schelten, Vaughan, Yang, Fan, Goyal, Hartshorn, Yang, Mitra, Sravankumar, Korenev, Hinsvark, Rao, Zhang, Rodriguez, Gregerson, Spataru, Roziere, Biron, Tang, Chern, Caucheteux, Nayak, Bi, Marra, McConnell, Keller, Touret, Wu, Wong, Ferrer, Nikolaidis, Allonsius, Song, Pintz, Livshits, Wyatt, Esiobu, Choudhary, Mahajan, {Garcia-Olano}, Perino, Hupkes, Lakomkin, AlBadawy, Lobanova, Dinan, Smith, Radenovic, Guzm{\'a}n, Zhang, Synnaeve, Lee, Anderson, Thattai, Nail, Mialon, Pang, Cucurell, Nguyen, Korevaar, Xu, Touvron, Zarov, Ibarra, Kloumann, Misra, Evtimov, Zhang, Copet, Lee, Geffert, Vranes, Park, Mahadeokar, Shah, van~der Linde, Billock, Hong, Lee, Fu, Chi, Huang, Liu, Wang, Yu, Bitton, Spisak, Park, Rocca, Johnstun, Saxe, Jia, Alwala, Prasad, Upasani, Plawiak, Li, Heafield, Stone, {El-Arini}, Iyer, Malik, Chiu, Bhalla, Lakhotia, {Rantala-Yeary}, van~der Maaten, Chen, Tan, Jenkins, Martin, Madaan, Malo, Blecher, Landzaat, de~Oliveira, Muzzi, Pasupuleti, Singh, Paluri, Kardas, Tsimpoukelli, Oldham, Rita, Pavlova, Kambadur, Lewis, Si, Singh, Hassan, Goyal, Torabi, Bashlykov, Bogoychev, Chatterji, Zhang, Duchenne, {\c C}elebi, Alrassy, Zhang, Li, Vasic, Weng, Bhargava, Dubal, Krishnan, Koura, Xu, He, Dong, Srinivasan, Ganapathy, Calderer, Cabral, Stojnic, Raileanu, Maheswari, Girdhar, Patel, Sauvestre, Polidoro, Sumbaly, Taylor, Silva, Hou, Wang, Hosseini, Chennabasappa, Singh, Bell, Kim, Edunov, Nie, Narang, Raparthy, Shen, Wan, Bhosale, Zhang, Vandenhende, Batra, Whitman, Sootla, Collot, Gururangan, Borodinsky, Herman, Fowler, Sheasha, Georgiou, Scialom, Speckbacher, Mihaylov, Xiao, Karn, Goswami, Gupta, Ramanathan, Kerkez, Gonguet, Do, Vogeti, Albiero, Petrovic, Chu, Xiong, Fu, Meers, Martinet, Wang, Wang, Tan, Xia, Xie, Jia, Wang, Goldschlag, Gaur, Babaei, Wen, Song, Zhang, Li, Mao, Coudert, Yan, Chen, Papakipos, Singh, Srivastava, Jain, Kelsey, Shajnfeld, Gangidi, Victoria, Goldstand, Menon, Sharma, Boesenberg, Baevski, Feinstein, Kallet, Sangani, Teo, Yunus, Lupu, Alvarado, Caples, Gu, Ho, Poulton, Ryan, Ramchandani, Dong, Franco, Goyal, Saraf, Chowdhury, Gabriel, Bharambe, Eisenman, Yazdan, James, Maurer, Leonhardi, Huang, Loyd, Paola, Paranjape, Liu, Wu, Ni, Hancock, Wasti, Spence, Stojkovic, Gamido, Montalvo, Parker, Burton, Mejia, Liu, Wang, Kim, Zhou, Hu, Chu, Cai, Tindal, Feichtenhofer, Gao, Civin, Beaty, Kreymer, Li, Adkins, Xu, Testuggine, David, Parikh, Liskovich, Foss, Wang, Le, Holland, Dowling, Jamil, Montgomery, Presani, Hahn, Wood, Le, Brinkman, Arcaute, Dunbar, Smothers, Sun, Kreuk, Tian, Kokkinos, Ozgenel, Caggioni, Kanayet, Seide, Florez, Schwarz, Badeer, Swee, Halpern, Herman, Sizov, Guangyi, Zhang, Lakshminarayanan, Inan, Shojanazeri, Zou, Wang, Zha, Habeeb, Rudolph, Suk, Aspegren, Goldman, Zhan, Damlaj, Molybog, Tufanov, Leontiadis, Veliche, Gat, Weissman, Geboski, Kohli, Lam, Asher, Gaya, Marcus, Tang, Chan, Zhen, Reizenstein, Teboul, Zhong, Jin, Yang, Cummings, Carvill, Shepard, McPhie, Torres, Ginsburg, Wang, Wu, U, Saxena, Khandelwal, Zand, Matosich, Veeraraghavan, Michelena, Li, Jagadeesh, Huang, Chawla, Huang, Chen, Garg, A, Silva, Bell, Zhang, Guo, Yu, Moshkovich, Wehrstedt, Khabsa, Avalani, Bhatt, Mankus, Hasson, Lennie, Reso, Groshev, Naumov, Lathi, Keneally, Liu, Seltzer, Valko, Restrepo, Patel, Vyatskov, Samvelyan, Clark, Macey, Wang, Hermoso, Metanat, Rastegari, Bansal, Santhanam, Parks, White, Bawa, Singhal, Egebo, Usunier, Mehta, Laptev, Dong, Cheng, Chernoguz, Hart, Salpekar, Kalinli, Kent, Parekh, Saab, Balaji, Rittner, Bontrager, Roux, Dollar, Zvyagina, Ratanchandani, Yuvraj, Liang, Alao, Rodriguez, Ayub, Murthy, Nayani, Mitra, Parthasarathy, Li, Hogan, Battey, Wang, Howes, Rinott, Mehta, Siby, Bondu, Datta, Chugh, Hunt, Dhillon, Sidorov, Pan, Mahajan, Verma, Yamamoto, Ramaswamy, Lindsay, Lindsay, Feng, Lin, Zha, Patil, Shankar, Zhang, Zhang, Wang, Agarwal, Sajuyigbe, Chintala, Max, Chen, Kehoe, Satterfield, Govindaprasad, Gupta, Deng, Cho, Virk, Subramanian, Choudhury, Goldman, Remez, Glaser, Best, Koehler, Robinson, Li, Zhang, Matthews, Chou, Shaked, Vontimitta, Ajayi, Montanez, Mohan, Kumar, Mangla, Ionescu, Poenaru, Mihailescu, Ivanov, Li, Wang, Jiang, Bouaziz, Constable, Tang, Wu, Wang, Wu, Gao, Kleinman, Chen, Hu, Jia, Qi, Li, Zhang, Zhang, Adi, Nam, Yu, Wang, Zhao, Hao, Qian, Li, He, Rait, DeVito, Rosnbrick, Wen, Yang, Zhao, and Ma]{grattafioriLlama3Herd2024}
Aaron Grattafiori, Abhimanyu Dubey, Abhinav Jauhri, Abhinav Pandey, Abhishek Kadian, Ahmad {Al-Dahle}, Aiesha Letman, Akhil Mathur, Alan Schelten, Alex Vaughan, Amy Yang, Angela Fan, Anirudh Goyal, Anthony Hartshorn, Aobo Yang, Archi Mitra, Archie Sravankumar, Artem Korenev, Arthur Hinsvark, Arun Rao, Aston Zhang, Aurelien Rodriguez, Austen Gregerson, Ava Spataru, Baptiste Roziere, Bethany Biron, Binh Tang, Bobbie Chern, Charlotte Caucheteux, Chaya Nayak, Chloe Bi, Chris Marra, Chris McConnell, Christian Keller, Christophe Touret, Chunyang Wu, Corinne Wong, Cristian~Canton Ferrer, Cyrus Nikolaidis, Damien Allonsius, Daniel Song, Danielle Pintz, Danny Livshits, Danny Wyatt, David Esiobu, Dhruv Choudhary, Dhruv Mahajan, Diego {Garcia-Olano}, Diego Perino, Dieuwke Hupkes, Egor Lakomkin, Ehab AlBadawy, Elina Lobanova, Emily Dinan, Eric~Michael Smith, Filip Radenovic, Francisco Guzm{\'a}n, Frank Zhang, Gabriel Synnaeve, Gabrielle Lee, Georgia~Lewis Anderson, Govind Thattai, Graeme Nail, Gregoire Mialon, Guan Pang, Guillem Cucurell, Hailey Nguyen, Hannah Korevaar, Hu~Xu, Hugo Touvron, Iliyan Zarov, Imanol~Arrieta Ibarra, Isabel Kloumann, Ishan Misra, Ivan Evtimov, Jack Zhang, Jade Copet, Jaewon Lee, Jan Geffert, Jana Vranes, Jason Park, Jay Mahadeokar, Jeet Shah, Jelmer van~der Linde, Jennifer Billock, Jenny Hong, Jenya Lee, Jeremy Fu, Jianfeng Chi, Jianyu Huang, Jiawen Liu, Jie Wang, Jiecao Yu, Joanna Bitton, Joe Spisak, Jongsoo Park, Joseph Rocca, Joshua Johnstun, Joshua Saxe, Junteng Jia, Kalyan~Vasuden Alwala, Karthik Prasad, Kartikeya Upasani, Kate Plawiak, Ke~Li, Kenneth Heafield, Kevin Stone, Khalid {El-Arini}, Krithika Iyer, Kshitiz Malik, Kuenley Chiu, Kunal Bhalla, Kushal Lakhotia, Lauren {Rantala-Yeary}, Laurens van~der Maaten, Lawrence Chen, Liang Tan, Liz Jenkins, Louis Martin, Lovish Madaan, Lubo Malo, Lukas Blecher, Lukas Landzaat, Luke de~Oliveira, Madeline Muzzi, Mahesh Pasupuleti, Mannat Singh, Manohar Paluri, Marcin Kardas, Maria Tsimpoukelli, Mathew Oldham, Mathieu Rita, Maya Pavlova, Melanie Kambadur, Mike Lewis, Min Si, Mitesh~Kumar Singh, Mona Hassan, Naman Goyal, Narjes Torabi, Nikolay Bashlykov, Nikolay Bogoychev, Niladri Chatterji, Ning Zhang, Olivier Duchenne, Onur {\c C}elebi, Patrick Alrassy, Pengchuan Zhang, Pengwei Li, Petar Vasic, Peter Weng, Prajjwal Bhargava, Pratik Dubal, Praveen Krishnan, Punit~Singh Koura, Puxin Xu, Qing He, Qingxiao Dong, Ragavan Srinivasan, Raj Ganapathy, Ramon Calderer, Ricardo~Silveira Cabral, Robert Stojnic, Roberta Raileanu, Rohan Maheswari, Rohit Girdhar, Rohit Patel, Romain Sauvestre, Ronnie Polidoro, Roshan Sumbaly, Ross Taylor, Ruan Silva, Rui Hou, Rui Wang, Saghar Hosseini, Sahana Chennabasappa, Sanjay Singh, Sean Bell, Seohyun~Sonia Kim, Sergey Edunov, Shaoliang Nie, Sharan Narang, Sharath Raparthy, Sheng Shen, Shengye Wan, Shruti Bhosale, Shun Zhang, Simon Vandenhende, Soumya Batra, Spencer Whitman, Sten Sootla, Stephane Collot, Suchin Gururangan, Sydney Borodinsky, Tamar Herman, Tara Fowler, Tarek Sheasha, Thomas Georgiou, Thomas Scialom, Tobias Speckbacher, Todor Mihaylov, Tong Xiao, Ujjwal Karn, Vedanuj Goswami, Vibhor Gupta, Vignesh Ramanathan, Viktor Kerkez, Vincent Gonguet, Virginie Do, Vish Vogeti, V{\'i}tor Albiero, Vladan Petrovic, Weiwei Chu, Wenhan Xiong, Wenyin Fu, Whitney Meers, Xavier Martinet, Xiaodong Wang, Xiaofang Wang, Xiaoqing~Ellen Tan, Xide Xia, Xinfeng Xie, Xuchao Jia, Xuewei Wang, Yaelle Goldschlag, Yashesh Gaur, Yasmine Babaei, Yi~Wen, Yiwen Song, Yuchen Zhang, Yue Li, Yuning Mao, Zacharie~Delpierre Coudert, Zheng Yan, Zhengxing Chen, Zoe Papakipos, Aaditya Singh, Aayushi Srivastava, Abha Jain, Adam Kelsey, Adam Shajnfeld, Adithya Gangidi, Adolfo Victoria, Ahuva Goldstand, Ajay Menon, Ajay Sharma, Alex Boesenberg, Alexei Baevski, Allie Feinstein, Amanda Kallet, Amit Sangani, Amos Teo, Anam Yunus, Andrei Lupu, Andres Alvarado, Andrew Caples, Andrew Gu, Andrew Ho, Andrew Poulton, Andrew Ryan, Ankit Ramchandani, Annie Dong, Annie Franco, Anuj Goyal, Aparajita Saraf, Arkabandhu Chowdhury, Ashley Gabriel, Ashwin Bharambe, Assaf Eisenman, Azadeh Yazdan, Beau James, Ben Maurer, Benjamin Leonhardi, Bernie Huang, Beth Loyd, Beto~De Paola, Bhargavi Paranjape, Bing Liu, Bo~Wu, Boyu Ni, Braden Hancock, Bram Wasti, Brandon Spence, Brani Stojkovic, Brian Gamido, Britt Montalvo, Carl Parker, Carly Burton, Catalina Mejia, Ce~Liu, Changhan Wang, Changkyu Kim, Chao Zhou, Chester Hu, Ching-Hsiang Chu, Chris Cai, Chris Tindal, Christoph Feichtenhofer, Cynthia Gao, Damon Civin, Dana Beaty, Daniel Kreymer, Daniel Li, David Adkins, David Xu, Davide Testuggine, Delia David, Devi Parikh, Diana Liskovich, Didem Foss, Dingkang Wang, Duc Le, Dustin Holland, Edward Dowling, Eissa Jamil, Elaine Montgomery, Eleonora Presani, Emily Hahn, Emily Wood, Eric-Tuan Le, Erik Brinkman, Esteban Arcaute, Evan Dunbar, Evan Smothers, Fei Sun, Felix Kreuk, Feng Tian, Filippos Kokkinos, Firat Ozgenel, Francesco Caggioni, Frank Kanayet, Frank Seide, Gabriela~Medina Florez, Gabriella Schwarz, Gada Badeer, Georgia Swee, Gil Halpern, Grant Herman, Grigory Sizov, Guangyi, Zhang, Guna Lakshminarayanan, Hakan Inan, Hamid Shojanazeri, Han Zou, Hannah Wang, Hanwen Zha, Haroun Habeeb, Harrison Rudolph, Helen Suk, Henry Aspegren, Hunter Goldman, Hongyuan Zhan, Ibrahim Damlaj, Igor Molybog, Igor Tufanov, Ilias Leontiadis, Irina-Elena Veliche, Itai Gat, Jake Weissman, James Geboski, James Kohli, Janice Lam, Japhet Asher, Jean-Baptiste Gaya, Jeff Marcus, Jeff Tang, Jennifer Chan, Jenny Zhen, Jeremy Reizenstein, Jeremy Teboul, Jessica Zhong, Jian Jin, Jingyi Yang, Joe Cummings, Jon Carvill, Jon Shepard, Jonathan McPhie, Jonathan Torres, Josh Ginsburg, Junjie Wang, Kai Wu, Kam~Hou U, Karan Saxena, Kartikay Khandelwal, Katayoun Zand, Kathy Matosich, Kaushik Veeraraghavan, Kelly Michelena, Keqian Li, Kiran Jagadeesh, Kun Huang, Kunal Chawla, Kyle Huang, Lailin Chen, Lakshya Garg, Lavender A, Leandro Silva, Lee Bell, Lei Zhang, Liangpeng Guo, Licheng Yu, Liron Moshkovich, Luca Wehrstedt, Madian Khabsa, Manav Avalani, Manish Bhatt, Martynas Mankus, Matan Hasson, Matthew Lennie, Matthias Reso, Maxim Groshev, Maxim Naumov, Maya Lathi, Meghan Keneally, Miao Liu, Michael~L. Seltzer, Michal Valko, Michelle Restrepo, Mihir Patel, Mik Vyatskov, Mikayel Samvelyan, Mike Clark, Mike Macey, Mike Wang, Miquel~Jubert Hermoso, Mo~Metanat, Mohammad Rastegari, Munish Bansal, Nandhini Santhanam, Natascha Parks, Natasha White, Navyata Bawa, Nayan Singhal, Nick Egebo, Nicolas Usunier, Nikhil Mehta, Nikolay~Pavlovich Laptev, Ning Dong, Norman Cheng, Oleg Chernoguz, Olivia Hart, Omkar Salpekar, Ozlem Kalinli, Parkin Kent, Parth Parekh, Paul Saab, Pavan Balaji, Pedro Rittner, Philip Bontrager, Pierre Roux, Piotr Dollar, Polina Zvyagina, Prashant Ratanchandani, Pritish Yuvraj, Qian Liang, Rachad Alao, Rachel Rodriguez, Rafi Ayub, Raghotham Murthy, Raghu Nayani, Rahul Mitra, Rangaprabhu Parthasarathy, Raymond Li, Rebekkah Hogan, Robin Battey, Rocky Wang, Russ Howes, Ruty Rinott, Sachin Mehta, Sachin Siby, Sai~Jayesh Bondu, Samyak Datta, Sara Chugh, Sara Hunt, Sargun Dhillon, Sasha Sidorov, Satadru Pan, Saurabh Mahajan, Saurabh Verma, Seiji Yamamoto, Sharadh Ramaswamy, Shaun Lindsay, Shaun Lindsay, Sheng Feng, Shenghao Lin, Shengxin~Cindy Zha, Shishir Patil, Shiva Shankar, Shuqiang Zhang, Shuqiang Zhang, Sinong Wang, Sneha Agarwal, Soji Sajuyigbe, Soumith Chintala, Stephanie Max, Stephen Chen, Steve Kehoe, Steve Satterfield, Sudarshan Govindaprasad, Sumit Gupta, Summer Deng, Sungmin Cho, Sunny Virk, Suraj Subramanian, Sy~Choudhury, Sydney Goldman, Tal Remez, Tamar Glaser, Tamara Best, Thilo Koehler, Thomas Robinson, Tianhe Li, Tianjun Zhang, Tim Matthews, Timothy Chou, Tzook Shaked, Varun Vontimitta, Victoria Ajayi, Victoria Montanez, Vijai Mohan, Vinay~Satish Kumar, Vishal Mangla, Vlad Ionescu, Vlad Poenaru, Vlad~Tiberiu Mihailescu, Vladimir Ivanov, Wei Li, Wenchen Wang, Wenwen Jiang, Wes Bouaziz, Will Constable, Xiaocheng Tang, Xiaojian Wu, Xiaolan Wang, Xilun Wu, Xinbo Gao, Yaniv Kleinman, Yanjun Chen, Ye~Hu, Ye~Jia, Ye~Qi, Yenda Li, Yilin Zhang, Ying Zhang, Yossi Adi, Youngjin Nam, Yu, Wang, Yu~Zhao, Yuchen Hao, Yundi Qian, Yunlu Li, Yuzi He, Zach Rait, Zachary DeVito, Zef Rosnbrick, Zhaoduo Wen, Zhenyu Yang, Zhiwei Zhao, and Zhiyu Ma.
\newblock The {{Llama}} 3 {{Herd}} of {{Models}}, November 2024.
\newblock URL \url{http://arxiv.org/abs/2407.21783}.

\bibitem[Han et~al.(2015)Han, Pool, Tran, and Dally]{Han2015}
Song Han, Jeff Pool, John Tran, and William Dally.
\newblock Learning both weights and connections for efficient neural networks.
\newblock In C.~Cortes, N.~Lawrence, D.~Lee, M.~Sugiyama, and R.~Garnett (eds.), \emph{Advances in Neural Information Processing Systems}, volume~28. Curran Associates, Inc., 2015.
\newblock URL \url{https://proceedings.neurips.cc/paper/2015/file/ae0eb3eed39d2bcef4622b2499a05fe6-Paper.pdf}.

\bibitem[Hassibi \& Stork(1993)Hassibi and Stork]{Hassibi1992}
Babak Hassibi and David Stork.
\newblock Second order derivatives for network pruning: Optimal brain surgeon.
\newblock In S.~Hanson, J.~Cowan, and C.~Giles (eds.), \emph{Advances in Neural Information Processing Systems}, volume~5. Morgan-Kaufmann, 1993.
\newblock URL \url{https://proceedings.neurips.cc/paper/1992/file/303ed4c69846ab36c2904d3ba8573050-Paper.pdf}.

\bibitem[Hazan \& Luo(2016)Hazan and Luo]{Hazan2016}
Elad Hazan and Haipeng Luo.
\newblock Variance-reduced and projection-free stochastic optimization.
\newblock In \emph{International Conference on Machine Learning}, pp.\  1263--1271. PMLR, 2016.

\bibitem[Hoefler et~al.(2021)Hoefler, Alistarh, Ben-Nun, Dryden, and Peste]{Hoefler2021}
Torsten Hoefler, Dan Alistarh, Tal Ben-Nun, Nikoli Dryden, and Alexandra Peste.
\newblock Sparsity in deep learning: Pruning and growth for efficient inference and training in neural networks.
\newblock \emph{arXiv preprint arXiv:2102.00554}, January 2021.

\bibitem[Jaggi(2013)]{Jaggi2013}
Martin Jaggi.
\newblock Revisiting frank-wolfe: Projection-free sparse convex optimization.
\newblock In \emph{Proceedings of the 30th international conference on machine learning}, pp.\  427--435, 2013.

\bibitem[Janowsky(1989)]{Janowsky1989}
Steven~A. Janowsky.
\newblock Pruning versus clipping in neural networks.
\newblock \emph{Phys. Rev. A}, 39:\penalty0 6600--6603, Jun 1989.
\newblock \doi{10.1103/PhysRevA.39.6600}.

\bibitem[Kwon et~al.(2022)Kwon, Kim, Mahoney, Hassoun, Keutzer, and Gholami]{Kwon2022}
Woosuk Kwon, Sehoon Kim, Michael~W. Mahoney, Joseph Hassoun, Kurt Keutzer, and Amir Gholami.
\newblock A fast post-training pruning framework for transformers.
\newblock March 2022.

\bibitem[Lacoste-Julien(2016)]{LacosteJulien2016}
Simon Lacoste-Julien.
\newblock Convergence rate of frank-wolfe for non-convex objectives.
\newblock July 2016.

\bibitem[Lacoste-Julien et~al.(2013)Lacoste-Julien, Jaggi, Schmidt, and Pletscher]{LacosteJulien2013}
Simon Lacoste-Julien, Martin Jaggi, Mark Schmidt, and Patrick Pletscher.
\newblock Block-coordinate frank-wolfe optimization for structural svms.
\newblock In \emph{International Conference on Machine Learning}, pp.\  53--61. PMLR, 2013.

\bibitem[LeCun et~al.(1989)LeCun, Denker, and Solla]{LeCun1989}
Yann LeCun, John~S. Denker, and Sara~A. Solla.
\newblock Optimal brain damage.
\newblock In David~S. Touretzky (ed.), \emph{Advances in Neural Information Processing Systems 2, {[NIPS} Conference, Denver, Colorado, USA, November 27-30, 1989]}, pp.\  598--605. Morgan Kaufmann, 1989.
\newblock URL \url{http://papers.nips.cc/paper/250-optimal-brain-damage}.

\bibitem[Levitin \& Polyak(1966)Levitin and Polyak]{Levitin1966}
Evgeny~S Levitin and Boris~T Polyak.
\newblock Constrained minimization methods.
\newblock \emph{USSR Computational mathematics and mathematical physics}, 6\penalty0 (5):\penalty0 1--50, 1966.

\bibitem[Merity et~al.(2016)Merity, Xiong, Bradbury, and Socher]{Merity2016}
Stephen Merity, Caiming Xiong, James Bradbury, and Richard Socher.
\newblock Pointer sentinel mixture models.
\newblock September 2016.

\bibitem[Miao et~al.(2022)Miao, Luo, Chen, Chen, Liu, and Wang]{miao2022}
Lu~Miao, Xiaolong Luo, Tianlong Chen, Wuyang Chen, Dong Liu, and Zhangyang Wang.
\newblock Learning pruning-friendly networks via frank-wolfe: One-shot, any-sparsity, and no retraining.
\newblock In \emph{International Conference on Learning Representations}, 2022.
\newblock URL \url{https://openreview.net/forum?id=O1DEtITim__}.

\bibitem[Mishra et~al.(2021)Mishra, Latorre, Pool, Stosic, Stosic, Venkatesh, Yu, and Micikevicius]{Mishra2021}
Asit Mishra, Jorge~Albericio Latorre, Jeff Pool, Darko Stosic, Dusan Stosic, Ganesh Venkatesh, Chong Yu, and Paulius Micikevicius.
\newblock Accelerating sparse deep neural networks.
\newblock April 2021.

\bibitem[Mokhtari et~al.(2018)Mokhtari, Hassani, and Karbasi]{Mokhtari2018}
Aryan Mokhtari, Hamed Hassani, and Amin Karbasi.
\newblock Conditional gradient method for stochastic submodular maximization: Closing the gap.
\newblock In \emph{International Conference on Artificial Intelligence and Statistics}, pp.\  1886--1895. PMLR, 2018.

\bibitem[Molchanov et~al.(2016)Molchanov, Tyree, Karras, Aila, and Kautz]{Molchanov2016}
Pavlo Molchanov, Stephen Tyree, Tero Karras, Timo Aila, and Jan Kautz.
\newblock Pruning convolutional neural networks for resource efficient inference.
\newblock November 2016.

\bibitem[N{\'e}giar et~al.(2020)N{\'e}giar, Dresdner, Tsai, El~Ghaoui, Locatello, Freund, and Pedregosa]{Negiar2020}
Geoffrey N{\'e}giar, Gideon Dresdner, Alicia Tsai, Laurent El~Ghaoui, Francesco Locatello, Robert Freund, and Fabian Pedregosa.
\newblock Stochastic frank-wolfe for constrained finite-sum minimization.
\newblock In \emph{International Conference on Machine Learning}, pp.\  7253--7262. PMLR, 2020.

\bibitem[Pethick et~al.(2025)Pethick, Xie, Antonakopoulos, Zhu, {Silveti-Falls}, and Cevher]{pethick2025training}
Thomas Pethick, Wanyun Xie, Kimon Antonakopoulos, Zhenyu Zhu, Antonio {Silveti-Falls}, and Volkan Cevher.
\newblock Training deep learning models with norm-constrained {{LMOs}}.
\newblock In \emph{Forty-Second International Conference on Machine Learning}, 2025.
\newblock URL \url{https://openreview.net/forum?id=2Oqm2IzTy9}.

\bibitem[Pokutta et~al.(2020)Pokutta, Spiegel, and Zimmer]{Pokutta2020}
Sebastian Pokutta, Christoph Spiegel, and Max Zimmer.
\newblock Deep neural network training with frank-wolfe.
\newblock \emph{arXiv preprint arXiv:2010.07243}, 2020.
\newblock URL \url{https://arxiv.org/abs/2010.07243}.

\bibitem[Raffel et~al.(2020)Raffel, Shazeer, Roberts, Lee, Narang, Matena, Zhou, Li, and Liu]{Raffel2020a}
Colin Raffel, Noam Shazeer, Adam Roberts, Katherine Lee, Sharan Narang, Michael Matena, Yanqi Zhou, Wei Li, and Peter~J Liu.
\newblock Exploring the limits of transfer learning with a unified text-to-text transformer.
\newblock \emph{The Journal of Machine Learning Research}, 21\penalty0 (1):\penalty0 5485--5551, 2020.

\bibitem[Ravi et~al.(2018)Ravi, Dinh, Lokhande, and Singh]{Ravi2018}
Sathya~N. Ravi, Tuan Dinh, Vishnu Lokhande, and Vikas Singh.
\newblock Constrained deep learning using conditional gradient and applications in computer vision.
\newblock March 2018.

\bibitem[Reddi et~al.(2016)Reddi, Sra, P{\'o}czos, and Smola]{Reddi2016}
Sashank~J Reddi, Suvrit Sra, Barnab{\'a}s P{\'o}czos, and Alex Smola.
\newblock Stochastic frank-wolfe methods for nonconvex optimization.
\newblock In \emph{2016 54th annual Allerton conference on communication, control, and computing (Allerton)}, pp.\  1244--1251. IEEE, 2016.

\bibitem[Riviere et~al.(2024)Riviere, Pathak, Sessa, Hardin, Bhupatiraju, Hussenot, Mesnard, Shahriari, Ram{\'e}, Ferret, Liu, Tafti, Friesen, Casbon, Ramos, Kumar, Lan, Jerome, Tsitsulin, Vieillard, Stanczyk, Girgin, Momchev, Hoffman, Thakoor, Grill, Neyshabur, Bachem, Walton, Severyn, Parrish, Ahmad, Hutchison, Abdagic, Carl, Shen, Brock, Coenen, Laforge, Paterson, Bastian, Piot, Wu, Royal, Chen, Kumar, Perry, Welty, {Choquette-Choo}, Sinopalnikov, Weinberger, Vijaykumar, Rogozi{\'n}ska, Herbison, Bandy, Wang, Noland, Moreira, Senter, Eltyshev, Visin, Rasskin, Wei, Cameron, Martins, Hashemi, {Klimczak-Pluci{\'n}ska}, Batra, Dhand, Nardini, Mein, Zhou, Svensson, Stanway, Chan, Zhou, Carrasqueira, Iljazi, Becker, Fernandez, van Amersfoort, Gordon, Lipschultz, Newlan, Ji, Mohamed, Badola, Black, Millican, McDonell, Nguyen, Sodhia, Greene, Sjoesund, Usui, Sifre, Heuermann, Lago, McNealus, Soares, Kilpatrick, Dixon, Martins, Reid, Singh, Iverson, G{\"o}rner, Velloso, Wirth, Davidow, Miller, Rahtz, Watson, Risdal, Kazemi, Moynihan, Zhang, Kahng, Park, Rahman, Khatwani, Dao, Bardoliwalla, Devanathan, Dumai, Chauhan, Wahltinez, Botarda, Barnes, Barham, Michel, Jin, Georgiev, Culliton, Kuppala, Comanescu, Merhej, Jana, Rokni, Agarwal, Mullins, Saadat, Carthy, Cogan, Perrin, Arnold, Krause, Dai, Garg, Sheth, Ronstrom, Chan, Jordan, Yu, Eccles, Hennigan, Kocisky, Doshi, Jain, Yadav, Meshram, Dharmadhikari, Barkley, Wei, Ye, Han, Kwon, Xu, Shen, Gong, Wei, Cotruta, Kirk, Rao, Giang, Peran, Warkentin, Collins, Barral, Ghahramani, Hadsell, Sculley, Banks, Dragan, Petrov, Vinyals, Dean, Hassabis, Kavukcuoglu, Farabet, Buchatskaya, Borgeaud, Fiedel, Joulin, Kenealy, Dadashi, and Andreev]{riviereGemma2Improving2024}
Morgane Riviere, Shreya Pathak, Pier~Giuseppe Sessa, Cassidy Hardin, Surya Bhupatiraju, L{\'e}onard Hussenot, Thomas Mesnard, Bobak Shahriari, Alexandre Ram{\'e}, Johan Ferret, Peter Liu, Pouya Tafti, Abe Friesen, Michelle Casbon, Sabela Ramos, Ravin Kumar, Charline~Le Lan, Sammy Jerome, Anton Tsitsulin, Nino Vieillard, Piotr Stanczyk, Sertan Girgin, Nikola Momchev, Matt Hoffman, Shantanu Thakoor, Jean-Bastien Grill, Behnam Neyshabur, Olivier Bachem, Alanna Walton, Aliaksei Severyn, Alicia Parrish, Aliya Ahmad, Allen Hutchison, Alvin Abdagic, Amanda Carl, Amy Shen, Andy Brock, Andy Coenen, Anthony Laforge, Antonia Paterson, Ben Bastian, Bilal Piot, Bo~Wu, Brandon Royal, Charlie Chen, Chintu Kumar, Chris Perry, Chris Welty, Christopher~A. {Choquette-Choo}, Danila Sinopalnikov, David Weinberger, Dimple Vijaykumar, Dominika Rogozi{\'n}ska, Dustin Herbison, Elisa Bandy, Emma Wang, Eric Noland, Erica Moreira, Evan Senter, Evgenii Eltyshev, Francesco Visin, Gabriel Rasskin, Gary Wei, Glenn Cameron, Gus Martins, Hadi Hashemi, Hanna {Klimczak-Pluci{\'n}ska}, Harleen Batra, Harsh Dhand, Ivan Nardini, Jacinda Mein, Jack Zhou, James Svensson, Jeff Stanway, Jetha Chan, Jin~Peng Zhou, Joana Carrasqueira, Joana Iljazi, Jocelyn Becker, Joe Fernandez, Joost van Amersfoort, Josh Gordon, Josh Lipschultz, Josh Newlan, Ju-yeong Ji, Kareem Mohamed, Kartikeya Badola, Kat Black, Katie Millican, Keelin McDonell, Kelvin Nguyen, Kiranbir Sodhia, Kish Greene, Lars~Lowe Sjoesund, Lauren Usui, Laurent Sifre, Lena Heuermann, Leticia Lago, Lilly McNealus, Livio~Baldini Soares, Logan Kilpatrick, Lucas Dixon, Luciano Martins, Machel Reid, Manvinder Singh, Mark Iverson, Martin G{\"o}rner, Mat Velloso, Mateo Wirth, Matt Davidow, Matt Miller, Matthew Rahtz, Matthew Watson, Meg Risdal, Mehran Kazemi, Michael Moynihan, Ming Zhang, Minsuk Kahng, Minwoo Park, Mofi Rahman, Mohit Khatwani, Natalie Dao, Nenshad Bardoliwalla, Nesh Devanathan, Neta Dumai, Nilay Chauhan, Oscar Wahltinez, Pankil Botarda, Parker Barnes, Paul Barham, Paul Michel, Pengchong Jin, Petko Georgiev, Phil Culliton, Pradeep Kuppala, Ramona Comanescu, Ramona Merhej, Reena Jana, Reza~Ardeshir Rokni, Rishabh Agarwal, Ryan Mullins, Samaneh Saadat, Sara~Mc Carthy, Sarah Cogan, Sarah Perrin, S{\'e}bastien M.~R. Arnold, Sebastian Krause, Shengyang Dai, Shruti Garg, Shruti Sheth, Sue Ronstrom, Susan Chan, Timothy Jordan, Ting Yu, Tom Eccles, Tom Hennigan, Tomas Kocisky, Tulsee Doshi, Vihan Jain, Vikas Yadav, Vilobh Meshram, Vishal Dharmadhikari, Warren Barkley, Wei Wei, Wenming Ye, Woohyun Han, Woosuk Kwon, Xiang Xu, Zhe Shen, Zhitao Gong, Zichuan Wei, Victor Cotruta, Phoebe Kirk, Anand Rao, Minh Giang, Ludovic Peran, Tris Warkentin, Eli Collins, Joelle Barral, Zoubin Ghahramani, Raia Hadsell, D.~Sculley, Jeanine Banks, Anca Dragan, Slav Petrov, Oriol Vinyals, Jeff Dean, Demis Hassabis, Koray Kavukcuoglu, Clement Farabet, Elena Buchatskaya, Sebastian Borgeaud, Noah Fiedel, Armand Joulin, Kathleen Kenealy, Robert Dadashi, and Alek Andreev.
\newblock Gemma 2: {{Improving Open Language Models}} at a {{Practical Size}}, October 2024.
\newblock URL \url{http://arxiv.org/abs/2408.00118}.

\bibitem[Shen et~al.(2019)Shen, Fang, Zhao, Huang, and Qian]{Shen2019}
Zebang Shen, Cong Fang, Peilin Zhao, Junzhou Huang, and Hui Qian.
\newblock Complexities in projection-free stochastic non-convex minimization.
\newblock In \emph{The 22nd International Conference on Artificial Intelligence and Statistics}, pp.\  2868--2876. PMLR, 2019.

\bibitem[Sun et~al.(2023)Sun, Liu, Bair, and Kolter]{Sun2023}
Mingjie Sun, Zhuang Liu, Anna Bair, and J.~Zico Kolter.
\newblock A simple and effective pruning approach for large language models.
\newblock June 2023.

\bibitem[Tsiligkaridis \& Roberts(2020)Tsiligkaridis and Roberts]{Tsiligkaridis2020}
Theodoros Tsiligkaridis and Jay Roberts.
\newblock On frank-wolfe optimization for adversarial robustness and interpretability.
\newblock December 2020.

\bibitem[Vaswani et~al.(2017)Vaswani, Shazeer, Parmar, Uszkoreit, Jones, Gomez, Kaiser, and Polosukhin]{Vaswani2017}
Ashish Vaswani, Noam Shazeer, Niki Parmar, Jakob Uszkoreit, Llion Jones, Aidan~N Gomez, {\L}ukasz Kaiser, and Illia Polosukhin.
\newblock Attention is all you need.
\newblock \emph{Advances in neural information processing systems}, 30, 2017.

\bibitem[Wolf et~al.(2020)Wolf, Debut, Sanh, Chaumond, Delangue, Moi, Cistac, Rault, Louf, Funtowicz, Davison, Shleifer, von Platen, Ma, Jernite, Plu, Xu, Le~Scao, Gugger, Drame, Lhoest, and Rush]{Wolf2020}
Thomas Wolf, Lysandre Debut, Victor Sanh, Julien Chaumond, Clement Delangue, Anthony Moi, Pierric Cistac, Tim Rault, Remi Louf, Morgan Funtowicz, Joe Davison, Sam Shleifer, Patrick von Platen, Clara Ma, Yacine Jernite, Julien Plu, Canwen Xu, Teven Le~Scao, Sylvain Gugger, Mariama Drame, Quentin Lhoest, and Alexander Rush.
\newblock Transformers: State-of-the-art natural language processing.
\newblock In \emph{Proceedings of the 2020 Conference on Empirical Methods in Natural Language Processing: System Demonstrations}, pp.\  38--45, Online, October 2020. Association for Computational Linguistics.
\newblock \doi{10.18653/v1/2020.emnlp-demos.6}.
\newblock URL \url{https://aclanthology.org/2020.emnlp-demos.6}.

\bibitem[Xie et~al.(2019)Xie, Shen, Zhang, Wang, and Qian]{Xie2019}
Jiahao Xie, Zebang Shen, Chao Zhang, Boyu Wang, and Hui Qian.
\newblock Efficient projection-free online methods with stochastic recursive gradient.
\newblock October 2019.

\bibitem[Yang et~al.(2025)Yang, Yang, Zhang, Hui, Zheng, Yu, Li, Liu, Huang, Wei, Lin, Yang, Tu, Zhang, Yang, Yang, Zhou, Lin, Dang, Lu, Bao, Yang, Yu, Li, Xue, Zhang, Zhu, Men, Lin, Li, Tang, Xia, Ren, Ren, Fan, Su, Zhang, Wan, Liu, Cui, Zhang, and Qiu]{yangQwen25TechnicalReport2025}
An~Yang, Baosong Yang, Beichen Zhang, Binyuan Hui, Bo~Zheng, Bowen Yu, Chengyuan Li, Dayiheng Liu, Fei Huang, Haoran Wei, Huan Lin, Jian Yang, Jianhong Tu, Jianwei Zhang, Jianxin Yang, Jiaxi Yang, Jingren Zhou, Junyang Lin, Kai Dang, Keming Lu, Keqin Bao, Kexin Yang, Le~Yu, Mei Li, Mingfeng Xue, Pei Zhang, Qin Zhu, Rui Men, Runji Lin, Tianhao Li, Tianyi Tang, Tingyu Xia, Xingzhang Ren, Xuancheng Ren, Yang Fan, Yang Su, Yichang Zhang, Yu~Wan, Yuqiong Liu, Zeyu Cui, Zhenru Zhang, and Zihan Qiu.
\newblock Qwen2.5 {{Technical Report}}, January 2025.
\newblock URL \url{http://arxiv.org/abs/2412.15115}.

\bibitem[Yeom et~al.(2019)Yeom, Seegerer, Lapuschkin, Binder, Wiedemann, Müller, and Samek]{Yeom2019}
Seul-Ki Yeom, Philipp Seegerer, Sebastian Lapuschkin, Alexander Binder, Simon Wiedemann, Klaus-Robert Müller, and Wojciech Samek.
\newblock Pruning by explaining: A novel criterion for deep neural network pruning.
\newblock December 2019.

\bibitem[Yin et~al.(2023)Yin, Wu, Zhang, Hsieh, Wang, Jia, Pechenizkiy, Liang, Wang, and Liu]{Yin2023a}
Lu~Yin, You Wu, Zhenyu Zhang, Cheng-Yu Hsieh, Yaqing Wang, Yiling Jia, Mykola Pechenizkiy, Yi~Liang, Zhangyang Wang, and Shiwei Liu.
\newblock Outlier weighed layerwise sparsity (owl): A missing secret sauce for pruning llms to high sparsity.
\newblock October 2023.

\bibitem[Young et~al.(2025)Young, Chen, Li, Huang, Zhang, Zhang, Wang, Li, Zhu, Chen, Chang, Yu, Liu, Liu, Yue, Yang, Yang, Xie, Huang, Hu, Ren, Niu, Nie, Li, Xu, Liu, Wang, Cai, Gu, Liu, and Dai]{youngYiOpenFoundation2025}
Alex Young, Bei Chen, Chao Li, Chengen Huang, Ge~Zhang, Guanwei Zhang, Guoyin Wang, Heng Li, Jiangcheng Zhu, Jianqun Chen, Jing Chang, Kaidong Yu, Peng Liu, Qiang Liu, Shawn Yue, Senbin Yang, Shiming Yang, Wen Xie, Wenhao Huang, Xiaohui Hu, Xiaoyi Ren, Xinyao Niu, Pengcheng Nie, Yanpeng Li, Yuchi Xu, Yudong Liu, Yue Wang, Yuxuan Cai, Zhenyu Gu, Zhiyuan Liu, and Zonghong Dai.
\newblock Yi: {{Open Foundation Models}} by 01.{{AI}}, January 2025.
\newblock URL \url{http://arxiv.org/abs/2403.04652}.

\bibitem[Yu et~al.(2025)Yu, Wang, Shan, Reed, and Wan]{yuSuperWeightLarge2025}
Mengxia Yu, De~Wang, Qi~Shan, Colorado~J. Reed, and Alvin Wan.
\newblock The {{Super Weight}} in {{Large Language Models}}, July 2025.
\newblock URL \url{http://arxiv.org/abs/2411.07191}.

\bibitem[Yurtsever et~al.(2019)Yurtsever, Sra, and Cevher]{Yurtsever2019}
Alp Yurtsever, Suvrit Sra, and Volkan Cevher.
\newblock Conditional gradient methods via stochastic path-integrated differential estimator.
\newblock In \emph{International Conference on Machine Learning}, pp.\  7282--7291. PMLR, 2019.

\bibitem[Zeng \& Figueiredo(2014)Zeng and Figueiredo]{Zeng2014}
Xiangrong Zeng and Mário A.~T. Figueiredo.
\newblock The ordered weighted $\ell_1$ norm: Atomic formulation, projections, and algorithms.
\newblock September 2014.

\bibitem[Zhang et~al.(2024)Zhang, Bai, Lin, Zhao, Hou, and Cannistraci]{Zhang2024}
Yingtao Zhang, Haoli Bai, Haokun Lin, Jialin Zhao, Lu~Hou, and Carlo~Vittorio Cannistraci.
\newblock Plug-and-play: An efficient post-training pruning method for large language models.
\newblock In \emph{The Twelfth International Conference on Learning Representations}, 2024.
\newblock URL \url{https://openreview.net/forum?id=Tr0lPx9woF}.

\bibitem[Zimmer et~al.(2023)Zimmer, Spiegel, and Pokutta]{Zimmer2021}
Max Zimmer, Christoph Spiegel, and Sebastian Pokutta.
\newblock {H}ow {I} {L}earned {T}o {S}top {W}orrying {A}nd {L}ove {R}etraining.
\newblock In \emph{International Conference on Learning Representations}, 2023.
\newblock URL \url{https://openreview.net/forum?id=_nF5imFKQI}.

\bibitem[Zimmer et~al.(2025)Zimmer, Spiegel, and Pokutta]{Zimmer2022}
Max Zimmer, Christoph Spiegel, and Sebastian Pokutta.
\newblock \emph{Compression-aware training of neural networks using Frank–Wolfe}, pp.\  137--168.
\newblock De Gruyter, Berlin, Boston, 2025.
\newblock ISBN 9783111376776.
\newblock \doi{doi:10.1515/9783111376776-010}.
\newblock URL \url{https://doi.org/10.1515/9783111376776-010}.

\end{thebibliography}
\bibliographystyle{iclr2026_conference}

\newpage
\appendix
\section{Use of Large Language Models}
Large language models were used to aid in writing (polishing text) as well as to help with the implementation of code components, including both the methods and the generation of plots. They also served as a tool for brainstorming research ideas and refining development approaches to address the challenges explored in this paper.

\section{The \gls{sparsefw} algorithm}
We state the full \glsshort{sparsefw} algorithm in \Cref{alg:full-fw-extensive}, which includes the details about how the fraction $\alpha$ of weights fixed to one is implemented. Before running \gls{fw}, we compute the number of weights to keep based on saliency $k_{\text{keep}} = \lfloor k \cdot \alpha \rfloor$ and compute the mask of the weights to keep $\overline{M}$ by setting the $k_{\text{keep}}$ weights with the highest Wanda saliency scores $S$ to one and the remaining weights to zero. Then we apply \gls{fw} to the remaining weights with the adjusted sparsity budget $k_{\text{new}} = k (1-\alpha)$. Finally, we threshold the resulting mask $M_T$ by keeping its $k_{\text{new}}$ largest entries to obtain a binary mask $M^*$, and return $M^*+\overline{M}$, which preserves the salient weights and yields exactly $k$ nonzeros.

\begin{algorithm}[h]
  \caption{The \gls{sparsefw} algorithm}
  \label{alg:full-fw-extensive}
  \begin{algorithmic}[1]
     \Require Weight matrix $W$, input data $X$, nonzero entries $k$, maximum iterations $T$, warm-start saliency matrix $S$, fraction of weights to keep from saliency $\alpha$
     \vspace{0.2em}
     \hrule
     \vspace{0.2em}
  \State $k_{\text{keep}} \leftarrow \lfloor k \cdot \alpha \rfloor$ \Comment{Number of weights retained based on saliency}
  \State $k_{\text{new}} \leftarrow \lfloor k (1-\alpha) \rfloor$ \Comment{Remaining budget}
  \State $\overline{M}_{ij} \leftarrow 1$ for $(i,j) \in \topk_{\text{keep}}(S)$, $0$ otherwise \Comment{Fixed (preserved) mask}
  \State $G = XX^{\top}$, $H = W G$ \Comment{Precompute caches}
  \For{$t = 0$ to $T-1$}
      \State $\nabla f(M_{t}) = -2 \cdot W \odot (H - (W \odot M_{t}) G)$ \Comment{Compute gradient}
      \State $V_{t} = \text{LMO}\big(\nabla f(M_{t})\odot (1-\overline{M}), \mathcal{C}_{k_{\text{new}}}\big)$ \Comment{LMO on unfixed coordinates}
      \State $\eta_t \leftarrow \frac{2}{t+2}$
      \State $M_{t+1} \leftarrow (1-\eta_t)M_{t} + \eta_t V_{t}$ \Comment{FW Update}
  \EndFor
  \State $M^*_{ij} \leftarrow 1$ if $(i,j) \in \topk_{\text{new}}(M_T)$ else $0$ \Comment{Threshold}\\
  \Return $M^*+\overline{M}$
  \end{algorithmic}
  \end{algorithm}

\FloatBarrier
\newpage

\section{Ratio of fixed weights ablation}\label{sec:fix-ratio-ablation}

\cref{tab:fix-ratio-ablation} illustrates how the ratio $\alpha$ of fixed weights impacts \gls{sparsefw} performance. Optimal results occur mostly at $\alpha=0.9$, though even a small $\alpha$ (e.g., $\alpha=0.1$) significantly enhance perplexity. Conversely, $\alpha=0.0$ (full \gls{fw} with no fixed weights) consistently underperforms compared to the baselines.

\begin{table}[h]
    \centering
    \caption{Perplexity ($\downarrow$, lower is better) comparison on WikiText. We report \gls{sparsefw} performance with after 2000 iterations using 256 samples with Wanda warmstart for unstructured 60\% sparsity and semi-structured $2{:}4$ sparsity for different ratios $\alpha$ of mask entries fixed to one (see \cref{alg:full-fw-extensive}). Here, $\alpha=1.0$ corresponds to the Wanda baseline, as no further mask entries can be optimized. Best values per row are highlighted in bold. The Wanda column provides a baseline for comparison.\\ }
    \label{tab:fix-ratio-ablation}

    \begin{tabular}{@{}lcccccccc@{}}
      \toprule
      & & \multicolumn{7}{c}{\textbf{$\alpha$-ratio of fixed weights}} \\
      \cmidrule{3-9}
          Model &Sparsity      & 0.0 & 0.1 & 0.25 & 0.5 & 0.75 & 0.9 & \multicolumn{1}{|c}{1.0 (Wanda)}\\ \midrule
        Gemma-2-9B & 2:4& 17.70 & 16.69 & 16.78 & 16.48 & 15.99 & \textbf{15.81} & \multicolumn{1}{|c}{17.41} \\
        Yi-1.5-9B & 2:4&12.26 & 11.50 & 11.49 & 11.25 & 10.83 & \textbf{10.61} & \multicolumn{1}{|c}{11.58} \\
        DeepSeek-7B & 2:4& 13.25 & 12.77 & 13.13 & 12.99 & 12.32 & \textbf{11.73} & \multicolumn{1}{|c}{11.76} \\
        Qwen2.5-7B & 2:4&16.16 & 14.96 & 15.06 & 15.21 & 14.59 & \textbf{14.16} & \multicolumn{1}{|c}{14.40} \\
        Qwen2.5-14B & 2:4& 13.70 & 12.62 & 13.34 & 12.99 & 12.79 & 11.82 & \multicolumn{1}{|c}{\textbf{11.37}} \\
        Llama-3.1-8B & 2:4& 21.95 & 20.47 & \textbf{20.45} & 21.77 & 21.73 & 21.49 & \multicolumn{1}{|c}{24.82} \\\midrule
        Gemma-2-9B &60\%& 18.25 & 16.41 & 15.78 & 15.46 & 14.92 & \textbf{14.83} & \multicolumn{1}{|c}{16.46} \\
        Yi-1.5-9B &60\%& 11.19 & \textbf{10.56} & 10.66 & 10.81 & 11.06 & 11.31 & \multicolumn{1}{|c}{11.38} \\
        DeepSeek-7B &60\%& 12.49 & 11.99 & 12.06 & 12.19 & 12.20 & 12.21 & \multicolumn{1}{|c}{\textbf{11.44}} \\
        Qwen-7B &60\%& 14.28 & 13.13 & 13.12 & 12.73 & \textbf{12.44} & 12.54 & \multicolumn{1}{|c}{13.47} \\
        Qwen-14B &60\%& 11.59 & 10.52 & 10.48 & 10.61 & 10.29 & \textbf{10.28} & \multicolumn{1}{|c}{10.87} \\
        Llama-3.1-8B &60\%& 22.47 & 18.96 & \textbf{17.97} & 18.04 & 18.27 & 19.07 & \multicolumn{1}{|c}{21.53} \\
      \bottomrule
      \end{tabular}
    \end{table}

\FloatBarrier
  \section{LMOs for Semi-Structured Sparsity}\label{sec:structured_lmos}
  Recall the definition of the constraint set $\mathcal{C}_k$ from \Cref{eq:constraint_set} for the unstructured sparsity case:
  \begin{equation*}
    \mathcal{C}_k = \left\{M \in [0,1]^{d_{\text{out}} \times d_{\text{in}}}: \norm{M}_1 \leq k\right\}.
  \end{equation*}
  For the $n\text{:}m$ sparsity case, which corresponds to keeping at most $m$ nonzeros in every group of $n$ consecutive entries of each row, and assuming $d_{\text{in}}$ is divisible by $n$, we can write the constraint set as
  \begin{equation*}
    \mathcal{C}_{n:m} = \left\{ M \in [0,1]^{d_{\text{out}} \times d_{\text{in}}} \bigm| \sum_{j=qn+1}^{(q+1)n} M_{i,j} \leq m,\ \forall i,\ \forall q \in \left\{0,\dots,d_{\text{in}}/n - 1\right\} \right\}.
  \end{equation*}
  Notice that this constraint set is simply the cartesian product of the constraint set for each block of $n$ consecutive entries of each row, which can be written as
  \begin{equation*}
    \mathcal{C}' = \left\{ M' \in [0,1]^{n}: \norm{M'}_1 \leq m \right\},
  \end{equation*}
  which is a special case of the polytope $\mathcal{C}_k$ when $d_{\text{out}}=n$, $d_{\text{in}}=1$ and $k=m$.
  Since we know the \gls{lmo} for $\mathcal{C}_k$ and the LMO problem is fully separable between the $\mathcal{C}'$ sets, we can simply apply the \gls{lmo} for $\mathcal{C}_k$ to each set $\mathcal{C}'$ individually to obtain the \gls{lmo} for $\mathcal{C}_{n:m}$. 

\section{Theoretical guarantee for \gls{sparsefw}}
For simplicity, we work in the row-wise formulation; the proof for the full-matrix case follows by the same arguments.
Let us introduce the relevant notation and definitions for the row-wise formulation.
We first fix $w\in\mathbb{R}^d_\text{in}$ (a row of $W$) and $X\in\mathbb{R}^{d_\text{in}\times B}$.
For $m\in\mathcal{C}_k$ as defined in \cref{eq:constraint_set}, the objective function is
\[
f(m):=\norm{w^\top X-(w\odot m)^\top X}_2^2
=(1-m)^\top Q(1-m),
\]
where $Q:=\mathrm{Diag}(w)\,(XX^\top)\,\mathrm{Diag}(w)\succeq0$. Let $\lambda_{\max}(Q)$ denote the top eigenvalue of $Q$.
We denote the combinatorial constraint of the original problem \cref{eq:P-compression} as
\[
\mathcal{C}_{\rm int}:=\Big\{m\in\{0,1\}^{d_{\text{in}}} \bigm|\sum_j m_j=k\Big\}.\]
Now we denote by $m^*$ the solution to the relaxed problem \labelcref{eq:P-pruning-relaxed} and by $m^{\rm int}$ the solution to the integral problem \labelcref{eq:P-compression}.

\begin{lemma}[Formal statement of \cref{lem:sparsefw-optimality-informal}]
\label{lem:sparsefw-optimality-formal}
Let $m^\varepsilon\in\mathcal{C}_k$ satisfy $\sum_j m^\varepsilon_j=k$ and 
$f(m^\varepsilon)\le f(m^*)+\varepsilon$.
Let $\widehat m:=\mathbf 1\{\,j\in \topk(m^\varepsilon)\,\}$ be its top-$k$ rounding.
Then, with $r:=d_\text{in}-k$,
\begin{equation} \label{eq:general-bound}
f(\widehat m)-f(m^{\rm int})
\le
\varepsilon\ +\ 2\,\lambda_{\max}(Q)\,\Big(\min\{k,r\}+\sqrt{2r\,\min\{k,r\}}\Big).
\end{equation}
Note that for sparsity 50\% or more, we have $2k\le d_\text{in}$ and hence $\min\{k,r\}=k$, it follows that 
\begin{equation} \label{eq:loose-bound}
f(\widehat m)-f(m^{\rm int})
\le
\varepsilon + 2\lambda_{\max}(Q) ( k+ \sqrt{2 d_\text{in}\cdot k}).
\end{equation}
\end{lemma}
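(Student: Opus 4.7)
The plan is to split the suboptimality gap into three standard pieces and then bound the quadratic perturbation induced by top-$k$ rounding. Writing
\[
f(\widehat m)-f(m^{\rm int}) \;=\; \underbrace{f(\widehat m)-f(m^\varepsilon)}_{\text{(A) rounding}} \;+\; \underbrace{f(m^\varepsilon)-f(m^*)}_{\text{(B) optimization}} \;+\; \underbrace{f(m^*)-f(m^{\rm int})}_{\text{(C) relaxation slack}},
\]
term (B) is at most $\varepsilon$ by assumption, and term (C) is nonpositive since $\mathcal{C}_{\rm int}\subseteq\mathcal{C}_k$, so $m^{\rm int}$ is feasible for the relaxed problem and cannot beat $m^*$. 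It remains to bound (A) by $2\lambda_{\max}(Q)\bigl(\min\{k,r\}+\sqrt{2r\min\{k,r\}}\bigr)$.

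For (A), I would substitute $u:=\mathbf 1 - m^\varepsilon$ and $\delta:=\widehat m - m^\varepsilon$, so that $\mathbf 1-\widehat m = u-\delta$ and, using $f(m)=(\mathbf 1-m)^\top Q(\mathbf 1-m)$, expand
\[
f(\widehat m)-f(m^\varepsilon) \;=\; \delta^\top Q\delta \;-\; 2\,\delta^\top Q u \;\le\; \lambda_{\max}(Q)\,\|\delta\|_2^2 \;+\; 2\lambda_{\max}(Q)\,\|\delta\|_2\,\|u\|_2,
\]
by $Q\succeq 0$ and Cauchy--Schwarz (bounding the operator norm by $\lambda_{\max}(Q)$). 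The two remaining quantities are controlled purely by the structure of the top-$k$ rounding.

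The key step is the norm bound on $\delta$. Let $S=\topk(m^\varepsilon)$ and $S^c$ its complement, of size $r=d_{\rm in}-k$. Since $m^\varepsilon$ and $\widehat m$ both sum to $k$, the mass shifted up on $S$ equals the mass shifted down on $S^c$; calling this common value $a$, we have $a=\sum_{j\in S}(1-m^\varepsilon_j)\le k$ and also $a=\sum_{j\in S^c}m^\varepsilon_j\le r$, hence $a\le\min\{k,r\}$. Because every entry of $\delta$ lies in $[-1,1]$, this yields $\|\delta\|_1=2a\le 2\min\{k,r\}$ and
\[
\|\delta\|_2^2 \;\le\; \|\delta\|_\infty\,\|\delta\|_1 \;\le\; 2\min\{k,r\}.
\]
The same argument applied to $u=\mathbf 1-m^\varepsilon\in[0,1]^{d_{\rm in}}$ with $\mathbf 1^\top u = r$ gives $\|u\|_2^2\le r$. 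Plugging these two bounds into the expansion yields exactly $2\lambda_{\max}(Q)\bigl(\min\{k,r\}+\sqrt{2r\min\{k,r\}}\bigr)$, proving \eqref{eq:general-bound}. Finally, \eqref{eq:loose-bound} follows by noting that $2k\le d_{\rm in}$ gives $\min\{k,r\}=k$ and $r\le d_{\rm in}$.

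The only mildly subtle step is recognizing the ``equal mass shift'' identity $\sum_{S}(1-m^\varepsilon_j)=\sum_{S^c}m^\varepsilon_j$ (via the preserved sum $k$) and combining it with the two-sided box constraint $m^\varepsilon\in[0,1]^{d_{\rm in}}$ to get the $\min\{k,r\}$ cap; everything else is a direct quadratic expansion and an operator-norm inequality. I do not expect any real obstacle here. The proof also makes transparent why the informal statement keeps the $\varepsilon=k\lambda_{\max}(Q)/T$ optimization error from the Frank--Wolfe guarantee separate from the data-dependent rounding term.
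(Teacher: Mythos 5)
Your proof is correct and follows essentially the same route as the paper's: the same reduction to bounding $f(\widehat m)-f(m^\varepsilon)$ via $f(m^*)\le f(m^{\rm int})$, the same quadratic expansion in $\delta=\widehat m-m^\varepsilon$ and $u=\mathbf 1-m^\varepsilon$ with Cauchy--Schwarz, and the same mass-shift quantity (your $a$ is the paper's $\tau$) capped by $\min\{k,r\}$ together with $\|\delta\|_2^2\le\|\delta\|_1$ and $\|u\|_2^2\le\|u\|_1=r$. No gaps.
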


\begin{proof}[Proof of \cref{lem:sparsefw-optimality-formal}]
Our goal is to bound $f(\widehat m)-f(m^{\rm int})$. To that end, first note that 
\begin{equation}
  f(m^\varepsilon)\le f(m^*)+\varepsilon\le f(m^{\rm int})+\varepsilon,
\end{equation}
where the first inequality follows by assumption on $m^\varepsilon$ and the second inequality follows since by the optimality of $m^*$ we have $f(m^*)\le f(m^{\rm int})$ (restricting to the $\mathcal{C}_{\rm int}$ can only make the objective worse). Therefore it suffices to bound $f(\widehat m)-f(m^\varepsilon)$.

Set $v:=\widehat m-m^\varepsilon$ and $z^\varepsilon:=\mathbf 1-m^\varepsilon$.
By construction, $\sum_j \widehat m_j=\sum_j m^\varepsilon_j=k$, hence $\mathbf 1^\top v=0$.
Let
\[
\tau := \sum_{j\notin \topk(m^\varepsilon)} m^\varepsilon_j
 = k-\sum_{j\in \topk(m^\varepsilon)} m^\varepsilon_j.
\]
Then we have that
\[
\begin{aligned}
f(\widehat m)-f(m^\varepsilon)
&=(z^\varepsilon - v)^\top Q(z^\varepsilon - v)-(z^\varepsilon)^\top Q z^\varepsilon\\
&= v^\top Q v\;-\;2\,z^{\varepsilon\top} Q v\\
&\le \lambda_{\max}(Q)\,\norm{v}_2^2 + 2 \lambda_{\max}(Q)\, \norm{z^\varepsilon}_2\,\norm{v}_2\\
&\le \lambda_{\max}(Q)\big(2\tau\big) + 2\lambda_{\max}(Q)\sqrt{r}\sqrt{2\tau},
\end{aligned}
\]
where the equalities follow by defintion of $f$ and the first inequality follows by Cauchy-Schwarz. For the second inequality, consider that we have that $\norm{v}_1=2\tau$ and $|v_j|\le 1$, hence $\norm{v}_2^2\le \norm{v}_1=2\tau$.
Further, we have that $\norm{z^\varepsilon}_2^2\le \norm{z^\varepsilon}_1=\sum_j(1-m^\varepsilon_j)=d_\text{in}-k=r$.

Lastly, we note that $\tau \le \min\{k,r\}$. This holds since  we have that $\tau \le \sum_j m_j = k$ and 
\[\tau =\sum_{j\notin \topk(m^\varepsilon)} m^\varepsilon_j\le \sum_{j\notin \topk(m^\varepsilon)}1\le d_\text{in}-k\] 
where the inequality follows since each $m_j \le 1$, and there are at most $d_\text{in}-k$ terms in that sum. 
This concludes the proof for the \cref{eq:general-bound} and the proof of the \cref{eq:loose-bound} follows by simple computations.

\end{proof}

\end{document}